\title{Online Linear Optimization with Many Hints}
\author{
  Aditya Bhaskara \\
  University of Utah \\
  Salt Lake City, UT \\
  \texttt{bhaskaraaditya@gmail.com} 
  % examples of more authors
   \\
   \\
   Ravi Kumar \\
   Google Research \\
   Mountain View, CA \\
   \texttt{ravi.k53@gmail.com} 
   
   \and
   
   Ashok Cutkosky \\
   Boston University \\
   Boston, MA \\
   \texttt{ashok@cutkosky.com} 
   \\
   \\
   Manish Purohit \\
   Google Research \\
   Mountain View, CA \\
   \texttt{mpurohit@google.com}
  % \And
  % Coauthor \\
  % Affiliation \\
  % Address \\
  % \texttt{email} \\
}
\newcommand{\mymath}[1]{\centerline{$\displaystyle{#1}$}}
\newcommand{\bR}{\mathbb{R}}
\newcommand{\norm}[1]{\left\Vert #1 \right\Vert}
\newcommand{\R}{\mathbb{R}}
\newcommand{\E}{\mathbb{E}}
\newcommand{\iprod}[1]{\langle #1 \rangle}
\newcommand{\cB}{\mathcal{B}}
\newcommand{\cA}{\mathcal{A}}
\newcommand{\cC}{\mathcal{C}}
\newcommand{\rand}{\mathrm{rand}}
\newcommand{\cS}{\mathcal{S}}
\newcommand{\su}[1]{^{(#1)}}
\newcommand{\xbar}{\overline{x}}
\newcommand{\argmin}{\mathop{\text{argmin}}}
\newcommand{\inner}[2]{\langle #1, #2 \rangle}
\newcommand{\Rcc}[1]{\cS(#1)}
\newcommand{\Rcci}[1]{\cS_{#1}}
\newcommand{\Rc}[3]{\cS([#1, #2], #3)}
\newcommand{\Rci}[4]{\cS_{#1}([#2, #3], #4)}
\newtheorem{theorem}{Theorem}
\newtheorem{defn}[theorem]{Definition}
\newtheorem{corr}[theorem]{Corollary}
\newcommand{\singlehint}{\textsc{$1$-Hint}}
\newcommand{\manyhint}{\textsc{$K$-Hints}}
\newcommand{\regret}{\mathcal{R}}
\newcommand{\reals}{\mathbb{R}}
\newcommand{\MW}{\text{MW}}
\newcommand{\mnote}[1]{\textcolor{red}{[MP: #1]}}
\newif\ifarxiv
\date{}
\begin{document}

\maketitle

\begin{abstract}
We study an online linear optimization (OLO) problem in which the learner is provided access to $K$ ``hint'' vectors in each round prior to making a decision. In this setting, we devise an algorithm that obtains logarithmic regret whenever there exists a convex combination of the $K$ hints that has positive correlation with the cost vectors. This significantly extends prior work that considered only the case $K=1$. To accomplish this, we develop a way to combine  many arbitrary OLO algorithms to obtain regret only a logarithmically worse factor than the minimum regret of the original algorithms in hindsight; this result is of independent interest.
\end{abstract}

\section{Introduction}
\label{sec:intro}

In this paper we consider a variant of the classic online linear optimization (OLO) problem~\cite{zinkevich2003online}.  In OLO, at each time step, an algorithm must play a point $x_t$ in some convex set $X \subseteq \R^d$, and then it is presented with a cost vector $c_t$ and incurs loss $\iprod{c_t, x_t}$. This process repeats for $T$ time steps.  The algorithm's performance is measured via the \emph{regret} relative to some comparison point $u\in X$, defined as $\sum_{t=1}^T \langle c_t, x_t-u\rangle$.  

This problem is of fundamental interest in a variety of fields.  OLO algorithms are directly applicable for solving the learning with expert advice problem as well as online convex optimization~\cite{CLBook}.  Further, in machine learning, one frequently encounters stochastic convex optimization problems, which may be solved via online convex optimization through the online-to-batch conversion~\cite{cesa2002generalization}.  Many of the popular optimization algorithms used in machine learning practice today (e.g.,~\cite{duchi2011adaptive, DBLP:journals/corr/KingmaB14}) can be analyzed within the OLO framework.  For more details and further applications, we refer the interested reader to the excellent texts~\cite{CLBook,HazanBook,shalev2012online}.

OLO is well-understood from an algorithmic viewpoint.  For the vanilla version of the problem, algorithms with regret $O(\sqrt{T})$ are known~\cite{zinkevich2003online, KV05} and this bound is tight~\cite{CLBook}.  An interesting line of research has been to identify situations and conditions where the regret can be substantially smaller than $\sqrt{T}$.   Towards this, Dekel et al.~\cite{DBLP:conf/nips/DekelFHJ17} proposed the study of OLO augmented with hints; their work was motivated by an earlier work of Hazan and Megiddo~\cite{DBLP:conf/colt/HazanM07}.  In their setup, the algorithm has access to a hint at each time step before it responds and this hint is guaranteed to be more than $\alpha$-correlated with the cost vector.  They obtained an algorithm with a regret of $O(d/\alpha \cdot \log T)$, where $d$ is the dimension of the space.   Very recently, Bhaskara et al.~\cite{bhaskara2020online} generalized their results to the case when the hints can be arbitrary, i.e., not necessarily weakly positively correlated at each time step.  They obtain an algorithm with a dimension-free regret bound that is roughly  $O(\sqrt{B}/\alpha \cdot \log T)$, where $B$ is the number of (bad) time steps when the hints are less than $\alpha$-correlated with the cost vector.

While this line of work gives a promising way to go beyond $\sqrt{T}$ regret, in many situations, it is not clear how to obtain a hint sequence that correlates well with the cost vector in most time steps. Prior work on optimism \cite{rakhlin2013online,hazan2010extracting, steinhardt2014adaptivity} has suggested using costs from earlier time steps, costs from earlier batches, or even from other learning algorithms. This suggests that it is often possible to obtain {\em multiple} sources that provide hint sequences, and we may hope that an appropriate combination of them correlates well with the cost vector in most time steps.  

In this work, we focus on this natural setting in which multiple (arbitrary) hints are available to the algorithm at each time step. If some aggregate of the hints is helpful, we would like to perform as well as if we knew this aggregate a priori. As we discuss in Section \ref{sec:smooth-hinge}, this is difficult because the benefit of aggregating multiple hints is a nonlinear function of the benefits of the individual hints. Even if all the hints are individually bad, an algorithm may be able to gain significantly from using some convex combination of the hints.
%A direct motivation for our work comes from the classical learning from experts paradigm, where an online algorithm has to make a decision based on the opinions of multiple experts.  
% Indeed when applying the OLO with hints framework in practice, the hint vectors at each time step can be derived from multiple sources such as the cost vectors in the earlier time steps, costs from the earlier batches, or even from other learning algorithms. 
% A priori, it is unclear which of these sources yields the best hint sequence.
% In case of OLO with multiple hints, 
% This is difficult for several reasons.  Firstly, since the hints can be arbitrary, we might end up relying on a bad hint instead of an available good hint.  Secondly, different hints could be good at different time steps.  Thirdly, even if we manage to find a good hint to use, there are scenarios where instead of using a single hint, the algorithm can gain significantly more from using a (convex) combination of all the hints.

\paragraph{Our results.}

Let $K$ be the number of hints available at each time step. 
We obtain an online learning algorithm for the constrained case, where the responses of the algorithm must be inside the unit ball.  Our algorithm obtains a regret of roughly $O(\sqrt{B/\alpha \cdot \log T} + (\log T + \sqrt{(\log T)(\log K)})/\alpha)$, where $B$ is the number of time steps when the \emph{best} convex combination of the hints is less than $\alpha$-correlated with the cost vector.  We refer to Theorem~\ref{thm:mainconstrained} for the formal guarantee. We also obtain lower bounds showing the dependence of the regret on both $K$ and $\alpha$ is essentially tight (Section~\ref{sec:lb-3}).

Our algorithm is designed in two stages.  In the first stage, we assume that the optimal threshold $\alpha$ is known.  We build an algorithm based on carefully defining a \emph{smoothed} hinge loss function that captures the performance over the entire simplex of hints and then using Mirror Descent on the losses.  The second stage eliminates the assumption on knowing $\alpha$ by developing a new \emph{combiner} algorithm.  This is a general randomized procedure that combines a collection of online learning algorithms and achieves regret only logarithmically worse than the minimum regret of the original algorithms.  
(This combiner is of independent interest and we show a few applications outside our main theme.)  

For the unconstrained setting (defined formally below), we develop an algorithm that achieves a (relative) regret of roughly $O(\log T \cdot (\sqrt{B/\alpha} + \sqrt{\log K}/\alpha))$, where $B$ is once again defined as before.  Our algorithm thus competes with the best convex combination of the hints.    

\iffalse
\mnote{Don't forget to remove this line: }
cite this work? 
\url{https://arxiv.org/abs/2003.01922}
\fi

\section{Preliminaries}
\label{sec:prelim}

Let $[T] = \{1, \ldots, T\}$.  In the classical online learning setting, at each time $t \in [T]$, an algorithm $\cA$ responds with a vector $x_t \in \bR^d$. \emph{After} the response, a cost vector $c_t \in \bR^d$ is revealed and the algorithm incurs a cost of $\inner{c_t}{x_t}$. We assume that $\|c_t\| \leq 1, \ \forall t\leq T$, where $\|\cdot\|$ always indicates the $\ell_2$-norm unless specified otherwise. The \emph{regret} of the algorithm $\cA$ for a vector $u \in \bR^d$ is 

\mymath{
\regret_\cA(u, \vec{c}) = 
\regret_\cA(u, \vec{c}, T) = \sum_{t = 1}^T \inner{c_t}{x_t - u}.
}

A \emph{hint} is a vector $h \in \reals^d$, $\|h\| \leq 1$ and $\vec{h} = (h_1, h_2, \ldots)$ is a sequence of hints.  We consider the case when there are \emph{multiple} hints available to the algorithm $\cA$.  In each round $t$, the algorithm $\cA$ gets $K$ hints $h_t\su{1}, \ldots, h_t\su{K}$ \emph{before} it responds with $x_t$.  While some of the hint sequences might be good and others might be misleading, our goal is to design an algorithm that does nearly as well as if we were just given the best sequence of hints.  Let $H = \{\vec{h}^{(1)}, \ldots, \vec{h}^{(K)}\}$ denote the set of hint sequences.  The regret definition is the same as always and is denoted $\regret_\cA(u, \vec{c} \mid H)$. 

Let $\Delta_K\subset \R^K$ denote the simplex.  Given a sequence $\vec{w} = (w_1, w_2, \ldots)$ of vectors in $\Delta_K$, we write $H(\vec{w})$ to indicate the sequence of hints with $t$th hint $\sum_{i=1}^K w_t^{(i)} \cdot h_t^{(i)}$, 
where $w_t^{(i)}$ indicates the $i$th component of $w_t$.  If $\vec{w}$ is a constant sequence $(w, w, \ldots)$, then we write $H(w)$ instead of $H(\vec{w})$.  

% , and let $\Delta(H) = \{\sum_{i=1}^K w^i \vec{h}^i \mid \sum_{i\leq K} w^i = 1,\ w^i \geq 0\}$ denote the set of all convex combinations of the hint sequences. 

Let $\alpha > 0$ be a fixed \emph{threshold}. For a fixed hint sequence $\vec{h}$, we define $B^{\vec{h}}_{\alpha}$ to be the set of all time steps where the hint $h_t$ is \emph{bad}, i.e., less than $\alpha$ correlated with the cost $c_t$. Formally, we have
$$B^{\vec{h}}_{\alpha} = \left\{ t \in [T] : \inner{c_t}{h_t} < \alpha \cdot \|c_t\|^2 \right\}.$$  

%We also define the complementary quantity: $B^{\vec{h}}_{\alpha} = [T] \setminus G^{\vec{h}}_{\alpha}$.

\newcommand{\galpha}[1]{G^{\vec{#1}}_{\alpha}}
\newcommand{\balpha}[1]{B^{\vec{#1}}_{\alpha}}

% We extend this notion to a set of hint sequences. Let $X$ be any set of hint sequences and define $G^X_{T, \alpha}$ to be the set of time steps when the best hint sequence in $X$ is good. Formally,

% {\centering
% $\displaystyle
% \begin{aligned}
% best(X) &= \argmax_{\vec{h} \in X} \left(|G^{\vec{h}}_{T, \alpha}|\right)\\
% G^X_{T, \alpha} &= G^{best(X)}_{T, \alpha} %,\quad \text{ and }\\
% %B_{T, \alpha} &= \{t \leq T : \inner{c_t}{h_t} < \alpha \cdot \|c_t\|_*^2\}.
% \end{aligned}
% $
% \par}

We consider two settings to measure the worst-case regret of an algorithm.  In the \emph{constrained} setting, we are given some set $\cB$ and the worst-case regret of $\cA$ is defined as $\regret_\cA(\cB, \vec{c} \mid H) = \sup_{u\in \cB}\regret_\cA(u, \vec{c} \mid H)$;  in this paper we take $\cB=\{x\in \bR^d:\ \|x\|\le 1\}$, the unit ball.  In the \emph{unconstrained} setting, the regret of $\cA$ is measured over $u \in \reals^d$ and we denote it by
$\regret_{\cA}(u,\vec{c} \mid H)$, which we will bound uniformly by another function of $u$.

% We will also use a compressed-sum notation for indexed variables: $a_{1:t} = \sum_{i=1}^t a_i$. % don't think we actually use this

\subsection{Single hint case}
Now we recall and mildly improve the results of \cite{bhaskara2020online} for the case that there is a \emph{single} hint at every time step (i.e., $K=1$).  We will consider the case of fixed and \emph{known} $\alpha$; note that the algorithm of \cite{bhaskara2020online} is agnostic to $\alpha$, but we show that by committing to $\alpha$ we can improve the regret bound. We will remove this dependence on a known $\alpha$ later in Section~\ref{sec:combiner}. The modification to both the algorithm and the analysis is not hard, and so we defer the proof to Appendix~\ref{sec:betteralphasinglehint}.  

\begin{restatable}{theorem}{thmbetteralphasinglehint}\label{thm:betteralphasinglehint}
For any $0 < \alpha < 1$, there exists an algorithm $\singlehint_\alpha$ that runs in $O(d)$ time per update, takes a single hint sequence $\vec{h}$, and guarantees regret:
\begin{align*}
    \regret_{\singlehint_\alpha}(\cB, \vec{c} \mid \{\vec{h}\})&\le \frac{1}{2}+4\left(\sqrt{ \sum_{t\in B^{\vec{h}}_{\alpha}}\|c_t\|^2} + \frac{\log T}{\alpha}+2\sqrt{\frac{(\log T)\sum_{t=1}^T \max(0, -\langle c_t, h_t\rangle)}{\alpha}}\right)\\
    &\le O\left(\sqrt{\frac{(\log T)|B^{\vec{h}}_\alpha|}{\alpha}} + \frac{\log T}{\alpha}\right).
\end{align*}
\end{restatable}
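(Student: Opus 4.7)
The plan is to closely follow the single-hint analysis in \cite{bhaskara2020online} but to commit the algorithm to the known $\alpha$, tuning the step size and regularizer directly to $\alpha$ rather than letting them adapt; this committing-in-advance is the source of the mild improvement.

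Concretely, I would have $\singlehint_\alpha$ play $x_t = -\beta_t h_t$, where $\beta_t \in (-1,1)$ is chosen by a one-dimensional FTRL-type procedure with a log-barrier regularizer scaled by a step size proportional to $\alpha$. The induced scalar loss is $\tilde\ell_t(\beta) = -\beta\langle c_t, h_t\rangle$. On good rounds ($t \notin B^{\vec h}_\alpha$), $\tilde\ell_t$ is nonpositive with magnitude at least $\alpha\|c_t\|^2$, and the log-barrier's curvature at the boundary yields logarithmic regret against the comparator $\beta^\ast = 1$, producing the $\log T/\alpha$ term. On bad rounds ($t \in B^{\vec h}_\alpha$), one has $|\tilde\ell_t(\beta)| \le \|c_t\|$ and a standard second-moment FTRL bound produces the $\sqrt{\sum_{t \in B^{\vec h}_\alpha}\|c_t\|^2}$ term.

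To go from regret against the scalar comparator $\beta^\ast = 1$ to regret against an arbitrary $u \in \cB$, I would use $\langle c_t, u\rangle \le \|c_t\|$ together with $\|h_t\| \le 1$ and compare termwise; any slack introduced here is absorbed into the additive $\tfrac{1}{2}$. The correction term $\sqrt{(\log T)\sum_t \max(0,-\langle c_t, h_t\rangle)/\alpha}$ comes from the rounds where $\langle c_t, h_t\rangle < 0$: these contribute a linear piece of the form $\eta \sum_t \max(0, -\langle c_t, h_t\rangle)$ in the FTRL analysis, which, when combined with the $(\log T)/(\alpha\eta)$ term from the log-barrier and optimized over $\eta$ via AM--GM, yields the stated square-root shape.

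The main obstacle is reproducing the explicit constants $\tfrac{1}{2}$ and $4$, which demands a careful choice of the log-barrier scale and a clean handling of $\beta_t$ near the boundary. A secondary obstacle is that $\sum_{t \in B^{\vec h}_\alpha}\|c_t\|^2$ and $\sum_t \max(0,-\langle c_t, h_t\rangle)$ are not known in advance, so $\eta$ will likely need to be chosen adaptively in the style of Auer--Cesa-Bianchi--Gentile and composed with the log-barrier; verifying that this composition still yields the claimed fully data-dependent bound is the most technical piece of the argument.
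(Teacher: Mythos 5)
Your proposed algorithm plays only scalar multiples of the hint, $x_t=-\beta_t h_t$, and this cannot prove the theorem: the regret in Theorem~\ref{thm:betteralphasinglehint} is against \emph{every} $u\in\cB$, not against the best rescaling of the hint, and the step where you pass from the scalar comparator $\beta^\ast=1$ to arbitrary $u\in\cB$ ``with slack absorbed into the additive $\tfrac12$'' is exactly where the argument breaks. Concretely, take $d=2$, $c_t=-(\alpha e_0+\sqrt{1-\alpha^2}\,e_1)$ for all $t$, and $h_t=-e_0$. Then $\iprod{c_t, h_t}=\alpha=\alpha\|c_t\|^2$, so every round is good and the theorem promises regret $O((\log T)/\alpha)$. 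But any point of the form $-\beta h_t$ with $\beta\in(-1,1)$ incurs per-round loss at least $-\alpha$, while $u=-c_t$ incurs $-1$, so your algorithm's regret is at least $(1-\alpha)T$, linear in $T$. The per-round slack between the best multiple of the hint and the best $u\in\cB$ is $\|c_t\|-\alpha\|c_t\|^2$, which is not $O(1)$ in aggregate; no tuning of the log-barrier or of $\eta$ can repair a reduction that restricts play to the span of the hint. (Your construction is much closer in spirit to the unconstrained algorithm of Theorem~\ref{thm:unconstrained}, where one-dimensional learners scale the hints but are added to a full $d$-dimensional parameter-free learner and regret is measured relative to $\|u\|$.)

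The paper's proof keeps the hint out of the comparator reduction entirely: it runs the construction of Bhaskara et al.\ in which a $d$-dimensional base learner is fed the strongly convex surrogate losses $\ell_t(x)=\iprod{c_t,x}+\tfrac{|\iprod{c_t,h_t}|}{2r_t}(\|x\|^2-1)$ and the played point is $x_t=\bar x_t+\tfrac{\|\bar x_t\|^2-1}{2r_t}h_t$; the hint supplies curvature (hence $\log T$ regret on good rounds) while the base learner still optimizes over all of $\cB$, which is why the bound holds for every $u$ in the ball. The only modification that exploits knowledge of $\alpha$ is the update $r_{t+1}=\sqrt{r_t^2+\alpha\max(0,-\iprod{c_t,h_t})/\log T}$ in place of $r_{t+1}=\sqrt{r_t^2+\max(0,-\iprod{c_t,h_t})}$, and rebalancing the two resulting sums is what turns the $(\log T)/\alpha$ factor of the earlier bound into the stated $\sqrt{(\log T)/\alpha}$ term. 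If you want to salvage your write-up, you would need to replace the one-dimensional FTRL core with a full-dimensional learner on hint-induced strongly convex losses (or an equivalent mechanism), at which point you are essentially reproducing the paper's argument.
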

In contrast, the bound in~\cite{bhaskara2020online} had the factor $(\log T)/\alpha$ instead of $\sqrt{(\log T)/\alpha}$ (in the first term).

\section{Constrained setting: Known $\alpha$}
\label{sec:constrained}

Recall that in the constrained setting, the algorithm must always respond with $x_t \in \cB$, the unit ball. Our main result is a version of Theorem~\ref{thm:betteralphasinglehint} for $K > 1$, and it will extend the previous works of~\cite{DBLP:conf/nips/DekelFHJ17, bhaskara2020online}. The high-level approach is quite natural: we design a \emph{meta-learner} that maintains a loss for each hint sequence at each time, and at time $t$, uses the losses to decide on an appropriate convex combination $w_t$ of the hints $\{h^i_t\}_{i=1}^K$. We then run an instance of the single hint algorithm, $\singlehint_\alpha$, using this combination as the provided hint. 

There are two main challenges with this approach. First, the regret bound of $\singlehint_\alpha$ depends on the quantity $B_{\alpha}^{H(\vec{w})}$, which depends on the convex combination $w_t$ used at each step $t$, and it is not clear how to relate it to the corresponding terms for the individual hint sequences. Second, the regret bound assumes a knowledge of $\alpha$, while our final goal is to compete with the best possible (unknown) $\alpha$.  We deal with the second challenge in Section~\ref{sec:combiner}
by designing a general combination algorithm.  In this section we address the first challenge; all the algorithms in this section assume a fixed and known value of $\alpha$. Any omitted proofs may be found in Appendix \ref{app:constrained}.

\subsection{Multiplicative weights on hint sequences}
% The regret guarantees of Theorem~\ref{thm:betteralphasinglehint} that utilizes a single hint sequence depends on the number of time steps when the hint is poorly correlated with the cost vector. When dealing with multiple hint sequences, a natural approach is thus to learn the hint sequence with the fewest such time steps. In this section, we design such a simple inner learner using the multiplicative weights algorithm. 
 
We first show a result weaker than the main result of this section (Theorem~\ref{thm:mainconstrained}). The algorithm is conceptually simpler, and it demonstrates what one obtains by using a simple multiplicative weight update (MWU) rule to learn the best hint sequence among the $K$ sequences, and then use Theorem \ref{thm:betteralphasinglehint} with the learned hint sequence. Since the single-hint regret bound (Theorem~\ref{thm:betteralphasinglehint}) depends on just the number of time steps when the hint has a poor correlation with the cost vector, using an MWU algorithm using binary losses suffices. In particular, if $\vec{h}^{\MW}$ denotes the hint sequence obtained from the multiplicative weights algorithm, we can show that $|B^{\vec{h}^{\MW}}_{\alpha}| \leq O(\min_{i \in K} |B^{\vec{h}^{(i)}}_{\alpha}|)$. 
\ifarxiv
\else
We defer the proof of the following theorem to Appendix \ref{app:multiplicative-weights}.
 \fi
 
\begin{restatable}{theorem}{mwuthm}\label{thm:mwuthm}
Let $\alpha > 0$ be given. There exists a randomized algorithm $\cA_\MW$ for OLO with $K$ hint sequences that has a regret bound of
\begin{align*}
    \E[\regret_{\cA_{\MW}} (\cB, \vec{c} \mid H)] \le O\left( \inf_{i \in K}  \sqrt{\frac{(\log T) (|B^{\vec{h}^{(i)}}_\alpha| + \log K)}{\alpha}}+\frac{\log T}{\alpha}\right).
\end{align*}
\end{restatable}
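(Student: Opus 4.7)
The plan is to run a multiplicative-weights (Hedge) meta-learner in parallel with a single instance of $\singlehint_\alpha$. At each round $t$, the meta-learner maintains a distribution $p_t$ over the $K$ hint indices; we sample $i_t \sim p_t$ and feed $h_t^{(i_t)}$ to $\singlehint_\alpha$ as its single hint. After $c_t$ is revealed, we update the meta-learner with the binary losses $\ell_t^{(i)} = \mathbf{1}[\langle c_t, h_t^{(i)}\rangle < \alpha \|c_t\|^2]$. By construction $\ell_t^{(i_t)} = 1$ exactly when $t$ belongs to the (random) bad set $B_\alpha^{\vec{h}^{\MW}}$ of the hint sequence actually fed to $\singlehint_\alpha$, so
\[
  \E\bigl[|B_\alpha^{\vec{h}^{\MW}}|\bigr] = \E\Bigl[\sum_{t=1}^T \ell_t^{(i_t)}\Bigr].
\]

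Next, I would invoke a first-order (small-loss) bound for Hedge, which with an appropriate learning rate (or an adaptive variant such as AdaHedge) guarantees
\[
  \E\Bigl[\sum_t \ell_t^{(i_t)}\Bigr] \;\le\; L^* + O\bigl(\sqrt{L^* \log K} + \log K\bigr), \qquad L^* := \min_{i \in [K]} \sum_t \ell_t^{(i)} = \min_i |B_\alpha^{\vec{h}^{(i)}}|.
\]
Applying $\sqrt{L^* \log K} \le L^* + \log K$ gives $\E[|B_\alpha^{\vec{h}^{\MW}}|] \le O(L^* + \log K)$. Finally, I apply Theorem~\ref{thm:betteralphasinglehint} pathwise to the (randomized) hint sequence $\vec{h}^{\MW}$: its simplified bound gives $\regret \le O\bigl(\sqrt{(\log T)|B_\alpha^{\vec{h}^{\MW}}|/\alpha} + (\log T)/\alpha\bigr)$ for every realization. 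Taking expectations and pulling the expectation inside the square root via Jensen's inequality produces
\[
  \E[\regret_{\cA_\MW}(\cB, \vec{c}\mid H)] \;\le\; O\Bigl(\sqrt{(\log T)(L^* + \log K)/\alpha} + (\log T)/\alpha\Bigr),
\]
which is the bound claimed in the theorem (after writing $L^*$ as $\inf_{i \in [K]} |B_\alpha^{\vec{h}^{(i)}}|$).

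The main technical obstacle is securing the first-order Hedge guarantee, since a plain Hedge analysis with a fixed learning rate $\eta = \sqrt{\log K / T}$ would only yield an $O(\sqrt{T \log K})$ regret term, giving $\E[|B_\alpha^{\vec{h}^{\MW}}|] \le L^* + O(\sqrt{T \log K})$ and ultimately a worse bound. Because the losses lie in $[0,1]$, this is the classical regime where small-loss bounds are available off-the-shelf, either via a doubling trick on an estimate of $L^*$ or via an adaptive expert-learning algorithm; any such choice suffices. All other pieces (coupling the randomized hint with $\singlehint_\alpha$, using Jensen for the square root, and bounding $\sum_t \max(0, -\langle c_t, h_t\rangle)$ by $|B_\alpha^{\vec{h}}|$ as in the simplification of Theorem~\ref{thm:betteralphasinglehint}) are essentially bookkeeping.
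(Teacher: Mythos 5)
Your proposal is correct and follows essentially the same route as the paper: multiplicative weights on binary ``bad-hint'' losses, feeding the sampled hint to $\singlehint_\alpha$, and combining the expert small-loss guarantee with Theorem~\ref{thm:betteralphasinglehint} via Jensen's inequality. The only cosmetic difference is that the paper sidesteps your worry about first-order Hedge bounds by running classical MW with a fixed decay factor $\eta=\tfrac12$, whose standard guarantee is already multiplicative, $\E[\sum_t \ell_{t,i_t}]\le \tfrac32 L^*+2\log K$, so no doubling trick or adaptive variant is needed.
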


Note that this is usually weaker than Theorem~\ref{thm:mainconstrained} because it competes only with the best individual hint sequence, and not necessarily the best convex combination of hints. It can only be a better bound if $K\gg T$ so that $\log K = \omega(\log T)$.

\ifarxiv
\begin{proof}
At each time step $t$, our goal is to pick a single hint $h_t \in \{h_t^{(1)}, \ldots, h_t^{(K)}\}$. We instantiate this problem as an instance of the standard prediction with $K$ experts problem with binary losses defined as follows.
\begin{align*}
    \ell_{t, i} = 
        \begin{cases}
        0 \quad\quad \text{if } |\iprod{c_t, h_t^{(i)}}| \geq \alpha \norm{c_t}, \\
        1 \quad\quad \text{otherwise.}
        \end{cases}
\end{align*}

Let $\vec{h}^{(i^*)}$ denote the hint sequence with minimum loss in hindsight, i.e., $i^* = \argmin_{i \in K} \sum_t \ell_{t, i}$. We note that by definition of the losses $\ell$, we have $\sum_t \ell_{t, i^*} = |B^{\vec{h}^{(i^*)}}_{\alpha}|$.
Let $\vec{h}^{\text{MW}} = (h_1^{(i_1)}, h_2^{(i_2)}, \ldots)$ be the sequence of hints obtained by running the classical Multiplicative Weights algorithm with a decay factor of $\eta = \frac{1}{2}$. 
Then by standard analysis (e.g., Theorem 2.1 of Arora et al.~\cite{arora2012multiplicative}), we have the following.
\begin{align}
    \E[\sum_{t} (\ell_{t, i_t} - \ell_{t, i^*})] \leq 2\log K + \frac{1}{2} \sum_{t} \left(\ell_{t, i^*}\right).
    \intertext{Substituting $|B^{\vec{h}^{(i^*)}}_{\alpha}| = \sum_{t} \ell_{t, i^*}$ and rearranging,}
    \E[|B^{\vec{h}^{\text{MW}}}_{\alpha}|] = \E[\sum_{t} \ell_{t, i_t}] \leq \frac{3}{2} |B^{\vec{h}^{(i^*)}}_{\alpha}| + 2 \log K. \label{eq:MWbound}
\end{align}

We then run an instance of the single hint algorithm, $\singlehint_\alpha$, with the hint sequence $\vec{h}^{\text{MW}}$. Applying Theorem \ref{thm:betteralphasinglehint} yields the following.
\begin{align*}
    \E[\regret_{A_{\MW}}(\cB, \vec{c} \mid H)] &\le  O\left(\E\left[\sqrt{\frac{(\log T)|B^{\vec{h}^{\MW}}_\alpha|}{\alpha}}\right] + \frac{\log T}{\alpha}\right)\\
    &\le O\left(\sqrt{\frac{(\log T)\E\left[|B^{\vec{h}^{\MW}}_\alpha|\right]}{\alpha}} + \frac{\log T}{\alpha}\right)\\
    &\le O\left(\sqrt{\frac{(\log T)(|B^{\vec{h}^{(i^*)}}_{\alpha}| +  \log K)}{\alpha}} + \frac{\log T}{\alpha}\right),
\end{align*}
where the first inequality follows from Jensen's inequality and the second one follows from~\eqref{eq:MWbound}.
\end{proof}
\fi

\subsection{Smoothed Hinge Loss}\label{sec:smooth-hinge}

\ifarxiv
\else
\begin{wrapfigure}{R}{0.5\textwidth}
\begin{minipage}{0.5\textwidth}
\centering
\vspace*{-5mm}
\fi
\begin{algorithm}[H]
   \caption{$\manyhint_\alpha$}
   \label{alg:manyehintalpha}
   \begin{algorithmic}
   \REQUIRE Parameter $\alpha$
    \STATE Define $\psi(w) = (\log K)+\sum_{i=1}^K w^{(i)} (\log w^{(i)})$
    % \STATE Define loss function
    % \begin{align*}
    %     \ell(z, a) = \left\{\begin{array}{lr}
    % 0& z>a\\
    % \frac{1}{a}(a-z)^2& z\in[0,a]\\
    % a - 2z& z<0\end{array}\right.
    % \end{align*}
    \STATE Initialize $\singlehint_{\alpha/2}$
    \STATE Initialize $w_1 \leftarrow (1/K,\dots,1/K)\in \Delta_K$
    \FOR{$t=1,\dots,T$}
    	\STATE Get hints $h^{(1)}_t,\dots h^{(K)}_t$
    	\STATE Send $h_t \leftarrow \sum_{i=1}^K w_t^{(i)} h^{(i)}_t$ to $\singlehint_{\alpha/2}$
    	\STATE Get $x_t$ from $\singlehint_{\alpha/2}$.
    	\STATE Respond $x_t$, receive cost $c_t$
    	\STATE Send $c_t$ to $\singlehint_{\alpha/2}$
    	\STATE $\ell_t(w) \leftarrow \ell\left(\iprod{c_t, \sum_{i=1}^K w^{(i)} h^{(i)}_t}, \alpha \|c_t\|^2\right)$
    	\STATE $g_t \leftarrow \nabla \ell_t(w_t)$
    	\STATE $w_{t+1} \leftarrow \argmin_{w\in \Delta_K}\langle g_{1:t}, w\rangle + \sqrt{\frac{(\log K)+\sum_{\tau=1}^t\|g_\tau\|^2_\infty}{\log K}}\psi(w)$
	\ENDFOR
\end{algorithmic}
\end{algorithm}
\ifarxiv
\else
\end{minipage}
\end{wrapfigure}
\fi

The multiplicative weights approach allows us to obtain regret guarantees that depend on the number of bad hints in the best of the $K$ hint sequences. But, what we would really like is for the regret bound to scale with the number of bad hints in the best \emph{convex combination} of the hint sequences. This can be a significant gain: consider the setting in which $K=2$ and $\alpha = \frac{1}{4}$, and on even iterations $t$ we have $\langle c_t, h^{(1)}_t\rangle = -1/4$ while on odd iterations $\langle c_t, h^{(1)}_t\rangle = 1$. Suppose $h^{(2)}_t$ is the same, but has high correlation on even iterations and negative correlation on odd iterations. Then both $h^{(1)}_t$ and $h^{(2)}_t$ have $T/2$ ``bad hints'', but the convex combination $(\frac{h^{(1)}_t}{2}+\frac{h^{(2)}_t}{2})$ has \emph{no} bad hints!
This highlights the fundamental problem with the multiplicative weights approach: linear combinations of hints might result in much better performance than the corresponding linear combination of the respective performances of the hints.
%
% The multiplicative weights approach allows us to compete with any of the $K$ provided hint sequences, but we would really like to do as well as any \emph{convex combination} of the hints. This can be a significant gain: consider the setting in which $K=2$, and on even iterations $t$ we have $\langle c_t, h^1_t\rangle = -1/4$ while on odd iterations $\langle c_t, h^1_t\rangle = 1$. Suppose $h^2_t$ is the same, but has high correlation on even iterations and negative correlation on odd iterations. Then both $h^1_t$ and $h^2_t$ have $T/2$ ``bad hints'', but the convex combination $\frac{h^1_t}{2}+\frac{h^2_t}{2}$ has \emph{zero} bad hints. This highlights the fundamental problem with the multiplicative weights approach: linear combinations of hints might result in much better performance than the corresponding linear combination of the respective performances of the hints.
%
%

We will address this issue by considering a specially crafted loss function that more accurately captures performance over the entire simplex of hints. Intuitively, we would like to design a loss function such that for any $w \in \Delta_K$, the loss $\ell_t(w)$ is low if and only if $h_t(w) = \sum_{i=1}^K h_t^{(i)}w^{(i)}$ has the desired correlation with $\|c_t\|^2$. Once we have the appropriate loss function, we can then use an online learning algorithm on the losses $\ell_t$ to obtain the desired convex combination of hints at each time step.

Formally, the following smoothed version of the hinge loss is adequate for our purposes.
% \begin{align*}
%     \ell(x,\alpha\|c\|^2) = \left\{\begin{array}{lr}
%     0& x>\alpha\|c\|^2\\
%     \frac{1}{\alpha\|c\|^2}(\alpha\|c\|^2-x)^2& x\in[0,\alpha\|c\|^2]\\
%     \alpha\|c\|^2 - 2x& x<0\end{array}\right.
% \end{align*}
\begin{align}
    \ell(a,b) = \left\{\begin{array}{lr}
    0& a>b\\
    \frac{1}{b}(b-a)^2& a\in[0,b]\\
    b - 2a& a<0\end{array}\right. \label{eq:smoothhingeloss}
\end{align}

For any $w \in \Delta_K$, we define the loss function as $\ell_t(w) = \ell(\iprod{c_t, h_t(w)}, \alpha \|c_t\|^2)$ where $h_t(w) = \sum_{i=1}^K w^{(i)} h_t^{(i)}$ and $\ell(\cdot)$ is as defined in~\ref{eq:smoothhingeloss}.
We first present several important properties of this loss function in the following proposition.

\begin{restatable}{proposition}{smoothprop}\label{thm:smoothprop}
Let $\alpha$ be a constant and 
for any $t \in [T]$, let %$\{h_t^{(i)}\}_{i=1}^K$ be the set of $K$ available hints, $c_t$ be the corresponding cost vector, and $\alpha > 0$ be a known constant. For any $w \in \Delta_K$, let 
$\ell_t(w) = \ell(\iprod{c_t, h_t(w)}, \alpha \|c_t\|^2)$. Then, 
% Let $h^{(1)}_t,\dots,h^{(K)}_t$, $\alpha$, $c_t$ be given such that $\|c_t\|\le 1$ and $\|h^i_t\|\le 1$ \ $\forall 1 \leq i \leq K$. Let $h_t(w) = \sum_{i=1}^K h^i_t w^i$ and $\ell_t(w) = \ell\left(\langle c_t, h_t(w)\rangle, \alpha\|c_t\|^2\right)$ for any $w\in \Delta_K$. Then we have:

\begin{enumerate}[nosep]
\item[(a).] $\ell_t$ is convex and non-negative.
\item[(b).] If $h_t(w)$ is $\alpha$-good (i.e., $\langle c_t, h_t(w)\rangle \ge \alpha \|c_t\|^2$), then $\ell_t(w)=0$ and $0\in \partial \ell_t(w)$.
\item[(c).] If $h_t(w)$ is not ($\alpha/2$)-good (i.e., $\langle c_t, h_t(w)\rangle < \alpha\|c_t\|^2/2$), then $\ell_t(w)\ge \alpha\|c_t\|^2/4$. %\textcolor{red}{This should be $\ell_t(w)\ge \alpha\|c_t\|^2/4$}
\item[(d).] $\ell_t$ is 2-Lipschitz with respect to the $\ell_1$-norm.
\item[(e).] $\|\nabla \ell_t(w)\|_\infty^2\le \frac{4}{\alpha}\ell_t(w)$ for all $w\in \Delta_K$.
\item[(f).] $\ell_t(w) \le \alpha \|c_t\|^2 +2\max(0,-\langle c_t, h_t(w)\rangle)$.
\end{enumerate}
\end{restatable}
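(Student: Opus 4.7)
The proof reduces to checking each of the six properties by a careful case analysis on the scalar function $\ell(a,b)$ with $b=\alpha\|c_t\|^2$ and $a=\langle c_t,h_t(w)\rangle$. The plan is to compute the one-sided derivative of $\ell$ in $a$ in each piece, verify continuity and monotonicity at the two boundary points $a=0$ and $a=b$, and then transfer the scalar facts to $w$ via the chain rule, using that $h_t(w)$ depends linearly on $w$ and that $\|c_t\|,\|h_t^{(i)}\|\le 1$.

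For (a), I would first note that on each of the three pieces $\ell(\cdot,b)$ has derivatives $0$, $-2(b-a)/b$, and $-2$ respectively; these agree at $a=b$ (both $0$) and at $a=0$ (both $-2$), so $\partial_a\ell$ is continuous and nondecreasing, giving convexity in $a$. Non-negativity is immediate piece by piece. Since $w\mapsto \langle c_t,h_t(w)\rangle$ is affine, $\ell_t$ is convex in $w$. Part (b) follows because the hypothesis $\langle c_t,h_t(w)\rangle\ge b$ puts us in the top piece where $\ell=0$ and $0\in\partial_a\ell$ (using the right-hand derivative at $a=b$), and then the chain rule gives $0\in\partial \ell_t(w)$. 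For (c), the hypothesis is $a<b/2$; in the middle piece this yields $\ell_t(w)=(b-a)^2/b\ge (b/2)^2/b=b/4=\alpha\|c_t\|^2/4$, while in the bottom piece $\ell_t(w)=b-2a>b$, so in either subcase the bound holds.

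Parts (d), (e), (f) are all consequences of the chain-rule identity $\nabla \ell_t(w)=\partial_a\ell\cdot\bigl(\langle c_t,h_t^{(1)}\rangle,\dots,\langle c_t,h_t^{(K)}\rangle\bigr)$, together with $|\langle c_t,h_t^{(i)}\rangle|\le\|c_t\|$. For (d), the scalar derivative satisfies $|\partial_a\ell|\le 2$ everywhere, and $\|c_t\|\le 1$, so $\|\nabla\ell_t(w)\|_\infty\le 2$, giving $2$-Lipschitzness in the $\ell_1$-norm. For (e), I would split into the three pieces: in the top piece both sides are zero; in the middle piece $|\partial_a\ell|^2=4(b-a)^2/b^2$ and $\ell_t(w)=(b-a)^2/b$, so $\|\nabla\ell_t(w)\|_\infty^2\le 4(b-a)^2\|c_t\|^2/b^2=(4\|c_t\|^2/b)\,\ell_t(w)=(4/\alpha)\,\ell_t(w)$; and in the bottom piece $\|\nabla\ell_t(w)\|_\infty^2\le 4\|c_t\|^2\le 4\|c_t\|^2\cdot(b-2a)/b=(4/\alpha)\,\ell_t(w)$ since $b-2a\ge b$. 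For (f), the top piece gives $0\le \alpha\|c_t\|^2$, the middle piece gives $\ell_t(w)\le b=\alpha\|c_t\|^2$ (and $-a\le 0$ there), and the bottom piece gives exactly $\ell_t(w)=\alpha\|c_t\|^2-2a=\alpha\|c_t\|^2+2\max(0,-a)$.

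There is no real obstacle; the one place requiring care is (e), where a naive bound using only $|\partial_a\ell|\le 2$ would yield $4/(\alpha\|c_t\|^2)$ rather than $4/\alpha$, and so it is essential to exploit the $\|c_t\|$ factor hiding in $\langle c_t,h_t^{(i)}\rangle$ (and the fact that $b$ itself scales like $\|c_t\|^2$) to obtain the claimed, $\|c_t\|$-free, $\alpha$-only constant. The other subtlety is that (b) is stated with $0\in\partial\ell_t(w)$ rather than $\nabla\ell_t(w)=0$, so I would pass through the subdifferential at the kink $a=b$ to cover the boundary case cleanly.
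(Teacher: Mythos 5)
Your proposal is correct and follows essentially the same route as the paper's proof: a piecewise computation of the scalar derivative of $\ell(\cdot,b)$, followed by the chain rule through the affine map $w\mapsto\langle c_t,h_t(w)\rangle$, with the key observation for (e) that the factor $\|c_t\|$ in $|\langle c_t,h_t^{(i)}\rangle|\le\|c_t\|$ cancels one power of $\|c_t\|^2$ in $b=\alpha\|c_t\|^2$ (the paper phrases this as $f'(x)^2\le \tfrac{4}{\alpha\|c_t\|^2}f(x)$ and $\|\nabla\ell_t(w)\|_\infty\le\|c_t\|\,|f'(\langle c_t,h_t(w)\rangle)|$). Your write-up just makes explicit the case checks the paper calls ``immediate'' or ``by manually computing derivatives,'' so no substantive difference.
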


\ifarxiv
\begin{proof}
Properties (a)--(c) are immediate from the definition of $\ell(\cdot, \cdot)$.

For the next properties, define $f: \reals \rightarrow \reals$ by $f(x) = \ell(x, \alpha \|c_t\|^2)$. %Notice that $f(x)\le 3$ for all $|x|\le 1$. 
By manually computing derivatives of $f$ we can see that $f$ is $2$-Lipschitz and $1$-smooth.
Further since $|\langle c_t, h^{(i)}_t\rangle|\le 1$ for all $i$, we have that $g(w) = \iprod{c_t, h_t(w)}$ is 1-Lipschitz with respect to the $\ell_1$-norm. Therefore $\ell_t$ must be 2-Lipschitz with respect to the $\ell_1$-norm, proving (d).

By inspecting the derivatives of $f$, we see that $f'(x)^2\le \frac{4}{\alpha\|c_t\|^2}f(x)$. Further, we have $\nabla \ell_t(w)^{(i)} = \langle c_t, h^{(i)}_t\rangle f'(\langle c_t, h_t(w)\rangle)$. Therefore $\|\nabla \ell_t(w)\|_\infty\le \|c_t\|f'(\langle c_t, h_t(w)\rangle)$, from which (e) follows.
For (f), we observe that $f(x)\le \alpha \|c_t\|^2 + 2\max(0,-x)$.% for $|x|\le 1$ and $|\iprod{c_t, h_t(w)}| \le 1$ for all $w\in \Delta_K$.
\end{proof}

\fi

We are now ready to present our final algorithm $\manyhint_\alpha$. At each timestep $t$, we first choose a $w_t \in \Delta_K$ via FTRL on the losses $\ell_t$. We then supply the learned hint $h_t(w_t) = \sum_{i=1}^K w_t^{(i)} h_t^{(i)}$ to an instance of the single hint algorithm. For technical reasons, we use the single hint algorithm $\singlehint_{\alpha/2}$ where the desired correlation the cost vector is set to $\alpha / 2$ instead of $\alpha$. Algorithm \ref{alg:manyehintalpha} presents the pseudocode of the entire algorithm.
The performance of the FTRL subroutine can be bounded via classical results in FTRL (see \cite{mcmahan2017survey}) used in concert with the smoothness of the losses $\ell_t$, following \cite{srebro2010smoothness}. The final result is the following Proposition \ref{thm:selfbounding}, which we prove in Appendix~\ref{app:constrained}.

%Then, building on the arguments of \cite{srebro2010smoothness}, can can use smoothness to further bound the regret of the $w$ in Corollary \ref{thm:selfbounding}.
\begin{restatable}{proposition}{selfbounding}\label{thm:selfbounding}
Let $w_t \in \Delta_K$ be chosen via FTRL on the losses $\ell_t$.  Then, for any $w_\star \in \Delta_K$, we have
\begin{align*}
    \sum_{t=1}^T \ell_t(w_t)\le \frac{22\log K}{\alpha} +  2\sum_{t=1}^T \ell_t(w_\star). 
\end{align*}
\end{restatable}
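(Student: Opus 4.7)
The plan is to bound the linearized regret of the FTRL inner loop on the simplex, and then convert it to a bound on $\sum_t \ell_t(w_t)$ using the self-bounding gradient inequality from Proposition~\ref{thm:smoothprop}(e). Throughout, set $L = \sum_{t=1}^T \ell_t(w_t)$, $L_\star = \sum_{t=1}^T \ell_t(w_\star)$, $H_T = \sum_{t=1}^T \|g_t\|_\infty^2$, and $\lambda_t = \sqrt{(\log K + \sum_{\tau \le t} \|g_\tau\|_\infty^2)/\log K}$, so the update in Algorithm~\ref{alg:manyehintalpha} is $w_{t+1} = \arg\min_{w \in \Delta_K} \langle g_{1:t}, w\rangle + \lambda_t \psi(w)$.

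First I would invoke convexity (Proposition~\ref{thm:smoothprop}(a)) to write $L - L_\star \le \sum_t \langle g_t, w_t - w_\star\rangle$. Since $\psi$ is $1$-strongly convex with respect to $\|\cdot\|_1$ on $\Delta_K$ (Pinsker's inequality) and its dual norm is $\|\cdot\|_\infty$, a standard adaptive FTRL bound with a non-decreasing regularizer scale $\lambda_t$ (see, e.g., McMahan's survey~\cite{mcmahan2017survey}) gives
\begin{align*}
  \sum_{t=1}^T \langle g_t, w_t - w_\star\rangle \;\le\; \lambda_T\, \psi(w_\star) + \sum_{t=1}^T \frac{\|g_t\|_\infty^2}{2\lambda_t}.
\end{align*}
Using $\psi(w_\star) \le \log K$ and $\lambda_T \le \sqrt{(\log K + H_T)/\log K}$, the first term is at most $\sqrt{\log K \cdot (\log K + H_T)} \le \log K + \sqrt{(\log K)\, H_T}$. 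For the sum, I would use the standard telescoping inequality $\sum_t a_t/\sqrt{\sum_{\tau \le t} a_\tau} \le 2\sqrt{\sum_t a_t}$ applied with $a_t = \|g_t\|_\infty^2$ (with the initial $\log K$ term to keep the denominator positive), giving $\sum_t \|g_t\|_\infty^2/(2\lambda_t) \le \sqrt{\log K(\log K + H_T)} \le \log K + \sqrt{(\log K)\,H_T}$. Combining yields $L - L_\star \le 2\log K + 2\sqrt{(\log K)\, H_T}$.

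Now I plug in the self-bounding property (e): $H_T \le (4/\alpha) L$, so
\begin{align*}
  L \;\le\; L_\star + 2\log K + 4\sqrt{\tfrac{(\log K)\, L}{\alpha}}.
\end{align*}
Applying AM-GM in the form $4\sqrt{(\log K)L/\alpha} \le L/2 + 8(\log K)/\alpha$, and noting $\alpha \le 1$ so $2\log K \le 2(\log K)/\alpha$, this rearranges to $L/2 \le L_\star + 10(\log K)/\alpha$, hence $L \le 2 L_\star + 20(\log K)/\alpha$, which implies the claimed bound with a constant of $22$ (absorbing minor slack in the FTRL constants). The main thing to be careful with is choosing the correct adaptive FTRL bound so that the $\lambda_t$ in the algorithm matches the theorem's hypothesis (a non-decreasing regularizer scale), and making sure the entropic strong convexity argument and the AM-GM step fit within the stated constant $22/\alpha$; the rest is essentially bookkeeping on constants.
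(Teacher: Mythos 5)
Your proposal is correct and takes essentially the same route as the paper: an adaptive entropic-FTRL regret bound of the form $2\log K + 2\sqrt{(\log K)\sum_t\|g_t\|_\infty^2}$ (the paper's Proposition~\ref{thm:wregret}), combined with the self-bounding property of Proposition~\ref{thm:smoothprop}(e), and then solving the resulting implicit inequality (your single AM-GM absorption versus the paper's two-step $\sqrt{a+b}\le\sqrt a+\sqrt b$ and $\sqrt{aX}\le X+a/4$ manipulation is just bookkeeping, and both implicitly use $\alpha\le 1$). The one small caution, which you already flag, is that the standard non-proximal adaptive FTRL bound puts $\lambda_{t-1}$ rather than $\lambda_t$ in the stability denominators; since $\|g_t\|_\infty\le 2$ this only affects constants, which is the same slack the paper's own proof absorbs.
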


With this proposition, we can prove the main result of this section:
\begin{restatable}{theorem}{mainconstrained}\label{thm:mainconstrained}
Let $\alpha >0$ be given. Then $\manyhint_\alpha$ on OLO with $K$ hint sequences guarantees:
\begin{align*}
    \regret_{\manyhint_\alpha}(\cB, \vec{c} \mid H)&\le O\left( \inf_{w\in\Delta_K} \sqrt{ (\log T)\sum_{t\in B^{H(w)}_\alpha} \|c_t\|^2}+ \sqrt{\frac{(\log T)\sum_{t=1}^T\max(0,-\langle c_t, h_t(w)\rangle)}{\alpha}}\right.\\
    &\kern17em\left.+\frac{(\log T)+\sqrt{(\log T)(\log K)}}{\alpha} \right) \\
    &\le O\left( \inf_{w \in \Delta_K} \sqrt{\frac{(\log T) |B^{H(w)}_\alpha|}{\alpha}}+\frac{(\log T) + \sqrt{(\log T)(\log K)}}{\alpha}\right).
\end{align*}
In the above, $h_t(w)=\sum_{i=1}^K w^{(i)}h_t^{(i)}$ is the $t$th hint of the sequence $H(w)$ for $w\in \Delta_K$.
\end{restatable}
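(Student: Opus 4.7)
The plan is to invoke Theorem~\ref{thm:betteralphasinglehint} on the internal instance $\singlehint_{\alpha/2}$ (which receives the hint sequence $\vec{h}=(h_t(w_t))_{t=1}^T$) and translate its two data-dependent terms --- $\sum_{t\in B^{\vec{h}}_{\alpha/2}}\|c_t\|^2$ and $\sum_t\max(0,-\iprod{c_t}{h_t(w_t)})$ --- into statements about the cumulative smoothed hinge loss $\sum_t\ell_t(w_t)$. Proposition~\ref{thm:selfbounding} then converts that cumulative loss into $\sum_t\ell_t(w_\star)$ for any comparator $w_\star\in\Delta_K$, and parts (b) and (f) of Proposition~\ref{thm:smoothprop} express the latter in terms of the quantities appearing in the theorem.

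Concretely, Proposition~\ref{thm:smoothprop}(c) says that whenever $h_t(w_t)$ is not $(\alpha/2)$-good, $\ell_t(w_t)\ge \alpha\|c_t\|^2/4$, so $\sum_{t\in B^{\vec{h}}_{\alpha/2}}\|c_t\|^2\le \tfrac{4}{\alpha}\sum_t\ell_t(w_t)$. A direct look at the piecewise definition of $\ell$ yields $\ell_t(w_t)\ge 2\max(0,-\iprod{c_t}{h_t(w_t)})$ on every round (equality when $\iprod{c_t}{h_t(w_t)}<0$, trivial otherwise), so $\sum_t\max(0,-\iprod{c_t}{h_t(w_t)})\le \tfrac12\sum_t\ell_t(w_t)$. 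Chaining Proposition~\ref{thm:selfbounding} with parts (b),(f) of Proposition~\ref{thm:smoothprop} applied at $w_\star$ (using (b) to discard summands with $t\notin B^{H(w_\star)}_\alpha$, since $\ell_t(w_\star)=0$ there) produces
$$\sum_{t=1}^T \ell_t(w_t)\;\le\;\tfrac{22\log K}{\alpha}\;+\;2\alpha\sum_{t\in B^{H(w_\star)}_\alpha}\|c_t\|^2\;+\;4\sum_{t=1}^T\max\bigl(0,-\iprod{c_t}{h_t(w_\star)}\bigr).$$

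Plugging these estimates into the bound of Theorem~\ref{thm:betteralphasinglehint} (with threshold $\alpha/2$) and using $\sqrt{a+b+c}\le\sqrt a+\sqrt b+\sqrt c$, each of the summands in the desired inequality appears: the $\log K/\alpha$ term, when fed into $\sqrt{(\log T)\sum_t\max(\cdots)/\alpha}$, produces $\sqrt{(\log T)(\log K)}/\alpha$; the $\alpha\|c_t\|^2$ term produces $\sqrt{(\log T)\sum_{t\in B^{H(w_\star)}_\alpha}\|c_t\|^2}$; the $\max(0,-\iprod{c_t}{h_t(w_\star)})$ term produces $\sqrt{(\log T)\sum_t\max(0,-\iprod{c_t}{h_t(w_\star)})/\alpha}$; and the standalone $\log T/\alpha$ comes from the middle term of Theorem~\ref{thm:betteralphasinglehint}. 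Taking $\inf_{w_\star\in\Delta_K}$ gives the first inequality. The second follows from $\|c_t\|\le1$ and $\|h_t(w_\star)\|\le1$, together with the observation that $\max(0,-\iprod{c_t}{h_t(w_\star)})=0$ unless $\iprod{c_t}{h_t(w_\star)}<0$, which forces $t\in B^{H(w_\star)}_\alpha$ because $\alpha>0$.

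The main obstacle is bookkeeping: both data-dependent terms in the single-hint bound must be routed through the same $\sum_t\ell_t(w_t)$ without losing the asymmetric logarithmic factors (no $\log T$ attached to the first, $\sqrt{\log T}$ attached to the third). Conceptually, the smoothed hinge loss does all the heavy lifting: property~(c) turns ``bad correlation at $w_t$'' into ``large loss at $w_t$'' so that Proposition~\ref{thm:selfbounding} is useful, while (b) and (f) ensure that the comparator's loss is dominated by exactly the quantities with which the theorem wants to compete.
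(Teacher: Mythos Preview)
Your proposal is correct and follows essentially the same route as the paper: bound both data-dependent terms of Theorem~\ref{thm:betteralphasinglehint} by $\sum_t\ell_t(w_t)$ via Proposition~\ref{thm:smoothprop}(c) and the piecewise definition of $\ell$, invoke Proposition~\ref{thm:selfbounding} to pass to $\sum_t\ell_t(w_\star)$, and then control the latter using parts~(b) and~(f). One tiny slip: in the parenthetical you write ``equality when $\iprod{c_t}{h_t(w_t)}<0$'', but in that regime $\ell_t(w_t)=\alpha\|c_t\|^2+2\max(0,-\iprod{c_t}{h_t(w_t)})$, so you only get the (strict) inequality you actually need, not equality---this does not affect the argument.
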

%Now we are ready to prove Theorem~\ref{thm:mainconstrained}.
\begin{proof}
Let $w_\star$ be an arbitrary element of $\Delta_K$. By Proposition~\ref{thm:smoothprop}(f), we have $\ell_t(w_\star)\le \alpha \|c_t\|^2 + 2\max(0,-\langle c_t, h_t(w_\star)\rangle)$ for all $t$, and $\ell_t(w_\star)=0$ if $\langle c_t, h_t(w_\star)\rangle \ge \alpha \|c_t\|^2$. Therefore,
% \begin{align*}
%     2\sum_{t=1}^T \ell_t(w_\star)\le 6 |B^{H(w_\star)}_\alpha|
% \end{align*}
% ALTERNATIVE, USING $\ell_t(w_\star)\le \alpha \|c_t\|^2 + 2\max(0,-\langle c_t, h_t(w_\star)\rangle)$:
\begin{align}
    \sum_{t=1}^T \ell_t(w_\star)\le \sum_{t\in B^{H(w_\star)}_{\alpha}} \left(\alpha \|c_t\|^2+ 2\max(0,-\langle c_t, h_t(w_\star)\rangle)\right) = Q, \label{eq:Qdef}
\end{align}
where we have defined the variable $Q=\sum_{t\in B^{H(w_\star)}_{\alpha}} \alpha \|c_t\|^2+ 2\max(0,-\langle c_t, h_t(w_\star)\rangle)$.

Further, by definition of the smoothed hinge loss, we have $\ell_t(w_t) \geq \max(0, -\iprod{c_t, h_t(w_t)})$ for all $t \in [T]$. Therefore, by Proposition~\ref{thm:selfbounding} and~\eqref{eq:Qdef}, we have
% by inspection of the formula for $\ell_t$, we have $\max\left(0, -\langle c_t, h_t(w_t)\rangle\right)\le \ell_t(w_t)$ for all $t$. Therefore by Proposition \ref{thm:selfbounding}, we have
% \begin{align*}
%     \sum_{t=1}^T \max\left(0, -\langle c_t, h_t(w_t)\rangle\right) \le 6|B^{h_t(w_\star)}_\alpha| + \frac{22\log K}{\alpha}
% \end{align*}
% ALTERNATIVE:
\begin{align}
    \sum_{t=1}^T \max\left(0, -\langle c_t, h_t(w_t)\rangle\right) \le  \sum_{t=1}^T \ell_t(w_t) \leq 2Q + \frac{22\log K}{\alpha}. \label{eq:term1}
\end{align}

Also, since the loss function is always non-negative, we have 
\begin{align}
\sum_{t=1}^T \ell_t(w_t) \geq \sum_{t \in B_{\alpha/2}^{H(\vec{w})}} \ell_t(w_t) \geq \sum_{t \in B_{\alpha/2}^{H(\vec{w})}} \frac{\alpha \|c_t\|^2}{4}. \nonumber
\intertext{where the second inequality uses Proposition~\ref{thm:smoothprop}(c). Once again, using Proposition~\ref{thm:selfbounding} and~\eqref{eq:Qdef}, we have}
\sum_{t \in B_{\alpha/2}^{H(\vec{w})}} \|c_t\|^2 \leq \frac{8Q}{\alpha} + \frac{88\log K}{\alpha^2}. \label{eq:term2}
\end{align}

% Finally, by Proposition \ref{thm:smoothprop} part 3, if $\langle c_t, h_t(w_t)\rangle < \frac{\alpha}{2}\|c_t\|^2$ (i.e. if $t\in B^{h_t(w_t)}_{\alpha}$), we have $\ell_t(w_t)\ge \frac{\alpha\|c_t\|^2}{4}$. Further, $\ell_t(w_t)\ge 0$ for all $t$. Therefore again by Proposition \ref{thm:selfbounding},
% % \begin{align*}
% %     \frac{\alpha}{4}\sum_{t\in B^{h_t(w_t)}_{\alpha}} \|c_t\|^2&\le 6|B^{H(w_\star)}_\alpha| + \frac{22\log K}{\alpha}\\
% %     \sum_{t\in B^{h_t(w_t)}_{\alpha}} \|c_t\|^2&\le \frac{24|B^{H(w_\star)}_\alpha|}{\alpha}+\frac{88\log K}{\alpha^2}\\
% %     \sum_{t\in B^{h_t(w_t)}_{\alpha/2}} \|c_t\|^2&\le \frac{24|B^{H(w_\star)}_\alpha|}{\alpha}+\frac{88\log K}{\alpha^2}
% % \end{align*}
% % where the last line follows since $B^{h_t(w_t)}_{\alpha/2}\subset B^{h_t(w_t)}_{\alpha}$.

% % ALTERNATIVE:
% \begin{align*}
% \frac{\alpha}{4}\sum_{t\in B^{h_t(w_t)}_{\alpha}} \|c_t\|^2&\le 6Q + \frac{22\log K}{\alpha}\\
%     \sum_{t\in B^{h_t(w_t)}_{\alpha}} \|c_t\|^2&\le \frac{24Q}{\alpha}+\frac{88\log K}{\alpha^2}\\
%     \sum_{t\in B^{h_t(w_t)}_{\alpha/2}} \|c_t\|^2&\le \frac{24Q}{\alpha}+\frac{88\log K}{\alpha^2}
% \end{align*}
% where the last line follows since $B^{h_t(w_t)}_{\alpha/2}\subset B^{h_t(w_t)}_{\alpha}$.

Finally, recall that we have sent the hint sequence $H(\vec{w}) = (h_1(w_1), \ldots, h_T(w_T))$ to the algorithm $\singlehint_{\alpha/2}$. Thus by Theorem \ref{thm:betteralphasinglehint}, we have:
% \begin{align*}
%     \regret(\cB, \vec{c}|\ H)&\le \frac{1}{2}+4\left(\sqrt{  \sum_{t\in B^{h_t(w_t)}_{\alpha/2}} \|c_t\|^2} + \frac{2\log T}{\alpha}+\frac{2\sqrt{2\log T\sum_{t=1}^T \max(0, -\langle c_t, h_t\rangle )}}{\sqrt{\alpha}}\right)\\
%     &\le \frac{1}{2}+4\left(\sqrt{ \frac{24|B^{H(w_\star)}_\alpha|}{\alpha}+\frac{88\log K}{\alpha^2}} + \frac{2\log T}{\alpha}+\frac{2\sqrt{2\log T\left(6|B^{H(w_\star)}_\alpha| + \frac{22\log K}{\alpha}\right)}}{\sqrt{\alpha}}\right)
% \end{align*}
% The final result now follows from arithmetic calculation.
% 
% ALTERNATIVE (recall $Q=\sum_{t\in B^{h_t(w_\star)}_{\alpha}} \alpha \|c_t\|^2+ 2\max(0,-\langle c_t, h_t(w_\star)\rangle)$)
\begin{align}
    \regret_{\manyhint_{\alpha}} (\cB, \vec{c}&\mid H) \le \frac{1}{2}+4\left(\sqrt{  \sum_{t\in B^{H(\vec{w})}_{\alpha/2}} \|c_t\|^2} + \frac{\log T}{\alpha}+\sqrt{\frac{(2 \log T)\sum_{t=1}^T \max(0, -\langle c_t, h_t(w_t)\rangle )}{\alpha}}\right), \nonumber\\
    \intertext{substituting~\eqref{eq:term1} and~\eqref{eq:term2},}
    &\le \frac{1}{2}+4\left(\sqrt{ \frac{8Q}{\alpha}+\frac{88\log K}{\alpha^2}} + \frac{\log T}{\alpha}+\sqrt{\frac{2 (\log T)\left(2Q + \frac{22\log K}{\alpha}\right)}{\alpha}}\right). \label{eq:regret-khints}
\end{align}
The final result now follows from the definition of $Q$ and simple calculations.
\end{proof}

\paragraph{Non-negatively correlated hints.} 
Recall that in the case of $K=1$,~\cite{DBLP:conf/nips/DekelFHJ17} obtains a regret of $O((\log T)/\alpha)$ in the case where {\em all} the hints are $\alpha$-correlated with $c_t$. A weaker assumption is to have $\iprod{h_t, c_t} \ge 0$ at all steps, with the $\alpha$-correlation property holding at all but $B_\alpha$ time steps. In this case,~\cite{bhaskara2020online} showed that the regret must be at least $\Omega(\sqrt{B_\alpha})$, and also gave an algorithm that achieves a regret of $O\left(\sqrt{B_\alpha} + \frac{\log T}{\alpha} \right)$. Using Theorem~\ref{thm:mainconstrained}, we obtain this bound for general $K$.

%We now consider a $K$-hint analog of this assumption, and show that in this case, it is possible to obtain regret bounds better than the one from Theorem~\ref{thm:mainconstrained}. 

%\paragraph{Setting.} For each of the $K$ hint sequences and at every time, we have the property that $\iprod{h_t^i, c_t} \ge 0$. Further, suppose that for some $\alpha > 0$, there exists some (unknown)  convex combination $w^*$ such that for the hint sequence $H(w^*)$, which we denote by $\vec{h^*}$ for convenience, the number of hints that do not satisfy $\iprod{h^*_t, c_t} \ge \alpha \norm{c_t}^2$ is at most $Q$. 

\begin{corr}\label{thm:dekel-extension}
Consider OLO with $K$ hint sequences where for every $t$ and every hint $h_t^{(i)}$, we have the property that $\iprod{h_t^{(i)}, c_t} \ge 0$. Further, suppose that for some $\alpha > 0$, there exists an (unknown) convex combination $w$ such that for the hint sequence $H(w)$, the number of hints that do not satisfy $\iprod{h_t(w), c_t} \ge \alpha \norm{c_t}^2$ is at most $B_\alpha$. Then there exists an algorithm that achieves a regret at most
\[ O\left( \sqrt{B_\alpha} + \frac{ \log T+\sqrt{\log K}}{\alpha} \right).  \]
\end{corr}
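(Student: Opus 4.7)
The plan is to run $\manyhint_\alpha$ (assuming $\alpha$ is known; otherwise one applies the combiner of Section \ref{sec:combiner} to a logarithmic grid of candidate values of $\alpha$, which only costs an additive $\sqrt{\log\log T}$ inside the $1/\alpha$ terms) and reopen the proof of Theorem \ref{thm:mainconstrained}, exploiting the extra structure that every individual hint is non-negatively correlated with $c_t$. The central observation is that for any $w \in \Delta_K$, $\langle c_t, h_t(w)\rangle = \sum_i w^{(i)}\langle c_t, h_t^{(i)}\rangle \ge 0$, so $\max(0,-\langle c_t, h_t(w)\rangle) = 0$ for every $w$ and every $t$. This applies both to the comparator $w_\star$ witnessing the bound $B_\alpha$ and to the iterates $w_t$ that $\manyhint_\alpha$ feeds into $\singlehint_{\alpha/2}$.

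Given this, the chain of estimates in the proof of Theorem \ref{thm:mainconstrained} collapses as follows. The quantity $Q$ in equation \eqref{eq:Qdef} simplifies to $Q = \sum_{t \in B^{H(w_\star)}_\alpha} \alpha\|c_t\|^2 \le \alpha B_\alpha$, using $\|c_t\|\le 1$. Inequality \eqref{eq:term2} therefore yields $\sum_{t \in B^{H(\vec{w})}_{\alpha/2}} \|c_t\|^2 \le \frac{8Q}{\alpha} + \frac{88 \log K}{\alpha^2} \le 8 B_\alpha + \frac{88 \log K}{\alpha^2}$. Meanwhile, the left-hand side of \eqref{eq:term1} is identically zero in our setting, since each $\max(0,-\langle c_t, h_t(w_t)\rangle)$ vanishes; this is the step where the non-negative-correlation hypothesis pays off.

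Plugging these two facts directly into the regret bound of $\singlehint_{\alpha/2}$ from Theorem \ref{thm:betteralphasinglehint}, the third term $\sqrt{(\log T)\sum_t \max(0,-\langle c_t, h_t(w_t)\rangle)/\alpha}$ disappears, and the first term becomes $\sqrt{8 B_\alpha + 88\log K/\alpha^2} = O(\sqrt{B_\alpha} + \sqrt{\log K}/\alpha)$. Combined with the $\log T/\alpha$ term this gives the advertised $O(\sqrt{B_\alpha} + (\log T + \sqrt{\log K})/\alpha)$ regret.

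The main obstacle I expect is exactly this last point: if one tries to invoke the final stated bound of Theorem \ref{thm:mainconstrained} as a black box, the third term there contributes a $\sqrt{(\log T) B_\alpha}$ factor (via $Q \le \alpha B_\alpha$) which is strictly worse than the desired $\sqrt{B_\alpha}$. The trick is that that $\sqrt{\log T}$ factor came from an application of Theorem \ref{thm:betteralphasinglehint} in which the sum of $\max(0,-\langle c_t, h_t(w_t)\rangle)$ terms was upper-bounded (via Proposition \ref{thm:selfbounding}) rather than used directly; here we can exploit the fact that it is exactly zero. So the proof is essentially a careful re-traversal of the chain of inequalities in Theorem \ref{thm:mainconstrained}, replacing one slack upper bound by an equality.
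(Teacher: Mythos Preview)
Your proposal is correct and follows essentially the same approach as the paper: both observe that non-negative correlation of each $h_t^{(i)}$ with $c_t$ forces $\max(0,-\langle c_t,h_t(w_t)\rangle)=0$, so the third term in the pre-substitution bound (the line just before \eqref{eq:regret-khints}) vanishes, and then substituting \eqref{eq:term2} with $Q\le \alpha B_\alpha$ into the remaining first term gives $O(\sqrt{B_\alpha}+\sqrt{\log K}/\alpha)$. Your write-up is in fact more explicit than the paper's one-paragraph argument, and your diagnosis of why the black-box bound of Theorem~\ref{thm:mainconstrained} is too weak (the spurious $\sqrt{\log T}$ enters via the third term, which here is zero rather than merely bounded) is exactly right.
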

This follows from the proof of Theorem~\ref{thm:mainconstrained}. Specifically, before substituting to obtain~\eqref{eq:regret-khints}, observe that under the non-negative correlation assumption, $\max (0, \iprod{c_t, h_t(w_t)}) = 0$ for all $t$, and thus we only have the first two terms of~\eqref{eq:regret-khints}. This gives the desired bound.

%Note that even when $Q'=0$ and $K=1$, the $O((\log T)/\alpha)$ matches the bound of~\cite{DBLP:conf/nips/DekelFHJ17}. The $\sqrt{Q'}$ matches the lower bound from~\cite{bhaskara2020online}, and the price we pay for handling the $K$ hint case is only an additive $\log K$. In this sense, this regret bound is nearly the best one can hope for, and follows from Theorem~\ref{thm:mainconstrained}.%  The proof extends some of the ideas from~\cite{bhaskara2020online} and is deferred to Appendix~\ref{app:dekel-extension}.

\subsection{Lower bounds}\label{sec:lb-3}

In this section we provide some lower bounds, focusing on the dependence on $K$ and $\alpha$.  Our primary technique is to specify hint sequences and costs such that, even given the hint, the cost is $\alpha$-correlated with some combination of hints, but otherwise is a random variable with mean 0 and variance 1. The high variance in the costs guarantees nearly $\sqrt{T}$ regret, which we express in terms of $\alpha$ and $K$ to achieve our bounds. 
\ifarxiv
\else
Omitted proofs may be found in Appendix \ref{app:lower}.
\fi
We begin with a lower bound showing that the dependence on $\sqrt{(\log K)/\alpha}$ holds even in one dimension.
\begin{restatable}{theorem}{logklower}\label{thm:logklower}
For any $\alpha$ and $T\ge \frac{1}{\alpha} \log \frac{1}{\alpha}$, there exists a sequence $\vec{c}$ of costs and a set $H$ of hint sequences, $|H| = K$ for some $K$, such that:
(i) there is a convex combination of the $K$ hints that always has correlation $\alpha$ with the costs and (ii) the regret of any online algorithm is at least $\sqrt{\frac{\log K}{2\alpha}}$.
\end{restatable}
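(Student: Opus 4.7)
My plan is to apply Yao's minimax principle, reducing the claim to lower-bounding the expected regret of any deterministic algorithm against a carefully chosen random distribution over $(\vec{c}, H)$. Following the blueprint hinted at in the section intro, I will make the costs random variables with marginal mean $0$ and variance $1$---so that $|\sum_t c_t|$ concentrates on scale $\sqrt{T}$ and forces significant regret against the best hindsight comparator---while designing hints so that one convex combination remains pointwise $\alpha$-correlated with probability $1$.

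Concretely, I would fix $K$ as a function of $T$ and $\alpha$ chosen so that $\sqrt{\log K/(2\alpha)}$ matches $\sqrt{T}$, namely $\log K = 2\alpha T$; the assumption $T \ge (1/\alpha)\log(1/\alpha)$ guarantees that such a $K$ is well-defined and at least of order $1/\alpha^2$. The random instance is then built as follows: sample $i^\star \in [K]$ uniformly and $c_1, \dots, c_T \in \{-1,+1\}$ i.i.d.\ Rademacher; set $h_t^{(i^\star)} = \alpha c_t$, which makes $w^\star = e_{i^\star}$ pointwise $\alpha$-correlated with the costs, since $h_t(w^\star) \cdot c_t = \alpha c_t^2 = \alpha \|c_t\|^2$; and fill the remaining hints as independent Rademacher $\pm \alpha$ values, so that the joint marginal distribution of $(h_t^{(1)}, \dots, h_t^{(K)})$ is symmetric in $i^\star$ and therefore does not, by itself, reveal $i^\star$ before the cost is observed.

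With this setup, the expected best-comparator cost satisfies $\E[\min_{u \in [-1,1]} \sum_t c_t u] = -\E|\sum_t c_t| \le -\sqrt{T/2}$ by Khintchine--Kahane, while the algorithm's expected cost should be close to $0$: before seeing any $c_t$, the marginal hint distribution is independent of $i^\star$, and after each round only one bit of information about $i^\star$ is revealed, so the algorithm's per-round expected advantage is small and cannot compensate for the $\sqrt{T}$-scale fluctuation of $\sum_t c_t$. Differencing gives expected regret at least $\sqrt{T/2} = \sqrt{\log K/(2\alpha)}$, and Yao's principle extracts a specific $(\vec{c}, H)$ realising this bound. The main obstacle will be controlling the algorithm's exploitation after partial identification of $i^\star$: in principle the algorithm could use the correct hint to drive its cost strongly negative (``beating'' the hindsight comparator), so ruling this out requires either a Fano / KL-based lower bound showing $i^\star$ is information-theoretically under-determined across the $T$ rounds, or a careful accounting arguing that any per-round gain from partial posterior concentration is bounded by the hint magnitude $\alpha$ and therefore insufficient to beat the Rademacher $\sqrt{T}$ bound---this delicate balance is precisely why the choice $\log K = 2\alpha T$ is the sweet spot.
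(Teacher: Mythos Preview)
Your construction has a fatal information leak: because you set $h_t^{(i^\star)} = \alpha c_t$ while the other hints are fresh Rademacher, the algorithm can \emph{identify} $i^\star$ by elimination. After each round $t$ it discards every index $j$ with $h_t^{(j)} \ne \alpha c_t$; each surviving wrong index is eliminated with probability $1/2$, so in expectation only $i^\star$ remains after roughly $\log_2 K \approx 2\alpha T$ rounds. From that point on the algorithm reads off $c_t = h_t^{(i^\star)}/\alpha$ \emph{before} playing and sets $x_t = -c_t$, earning cost $-1$ on each of the remaining $(1-O(\alpha))T$ rounds. Its total cost is then about $-(1-O(\alpha))T$, far below the comparator's $-|\sum_t c_t| = \Theta(\sqrt{T})$, so regret is strongly \emph{negative}. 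Your proposed safeguard---that the per-round gain is bounded by the hint magnitude $\alpha$---is incorrect: the hint magnitude is $\alpha$, but once $i^\star$ is known the hint encodes the full sign of $c_t$, and the per-round gain is $1$, not $\alpha$. No Fano/KL argument can rescue this, because the identification genuinely succeeds within the budget you chose.

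The paper's construction sidesteps the leak entirely by making the hint set \emph{deterministic and cost-independent}. It partitions $[T]$ into $T/B$ blocks of length $B=\alpha T$, and within each block the hint sequences enumerate all $2^B$ sign patterns (and are zero outside their block). Whatever the cost sequence turns out to be, in each block exactly one hint matches it; putting weight $B/T=\alpha$ on that hint in every block yields a convex combination with correlation $\alpha$ everywhere. Because the hints carry no information about the random Rademacher costs, the standard $\Omega(\sqrt{T})$ lower bound applies unchanged, and the hypothesis $T\ge (1/\alpha)\log(1/\alpha)$ is exactly what is needed to keep $\log K = \log(T 2^B/B) \le 2\alpha T$ so that $\sqrt{\log K/(2\alpha)}\le\sqrt{T}$. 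The essential idea you are missing is to decouple the hints from the costs by exhaustively covering all possibilities, rather than planting a single informative hint.
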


\ifarxiv
\begin{proof}
Consider a one-dimensional problem with $K=\frac{T2^B}{B}$ hint sequences for $B=\alpha T$. Suppose $T\ge \frac{\log(1/\alpha)}{\alpha}$, so that $2^B\ge \frac{T}{B}$ and $\log K\le 2B=2T\alpha$. We group the hint sequences into $\frac{T}{B}$ groups each of size $2^B$.
We now specify the hint sequence in the $i$th such group for some arbitrary $i$.  All hints in the $i$th group are 0 for all $t\notin [(i-1)B,iB-1]$ and for $t \in [iB, (i+1)B)$, the hints take on the $2^B$ possible sequences of $\pm1$. Then it is clear that for \emph{any} sequence of $\pm1$ costs, there is a convex combination of hints that places weight $B/T$ on exactly one hint sequence in each of the $T/B$ groups such that the linear combination always has correlation $\alpha=B/T$ with the cost. 

Let the costs be random $\pm 1$, so that the expected regret is $\sqrt{T}$. Then we conclude by observing $\sqrt{\log K}/\sqrt{2\alpha}\le \sqrt{2\alpha T}/\sqrt{2\alpha}=\sqrt{T}$.
\end{proof}
\fi

Next, we show that some dependence on $1/\alpha$ is unavoidable:
\begin{restatable}{theorem}{constrainedalpha}\label{thm:constrainedalpha}
In the two-dimensional constrained setting, there is a sequence $\vec{h}$ and $\vec{c}$ of hints and costs such that: (i)  $\forall t$, $\iprod{h_t, c_t} \ge \alpha$, and
(ii) the regret of any online  algorithm is at least $\Omega(1/\alpha)$.  
\end{restatable}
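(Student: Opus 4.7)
The plan is to exhibit a two-dimensional randomized construction and apply Yao's principle. Fix $T=\lceil 1/(16\alpha^2)\rceil$ and set $h_t=e_1$ for every $t$. Draw independent uniform signs $s_1,\ldots,s_T\in\{\pm 1\}$ and define $c_t = \alpha\, e_1 + s_t\sqrt{1-\alpha^2}\,e_2$. Each $c_t$ is a unit vector, $\|h_t\|=1$, and $\langle h_t, c_t\rangle = \alpha$, so property~(i) holds on every realization of $\vec{s}$.

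Now fix any (deterministic) online algorithm $\cA$. Its response $x_t$ is a function of $\{(h_s, c_s)\}_{s<t}$ and $h_t$ only, hence independent of the sign $s_t$. Therefore
\begin{align*}
\E_{\vec{s}}\bigl[\langle c_t, x_t\rangle\bigr]
= \alpha\,\E[x_t^{(1)}] + \sqrt{1-\alpha^2}\,\E[s_t]\,\E[x_t^{(2)}]
= \alpha\,\E[x_t^{(1)}]
\;\ge\; -\alpha,
\end{align*}
using $|x_t^{(1)}|\le \|x_t\|\le 1$. Summing over $t\in[T]$, the algorithm's expected cumulative loss is at least $-\alpha T$.

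For the comparator, the best $u\in\cB$ in hindsight achieves loss $-\|\sum_t c_t\|$. Since $\sum_t c_t = \alpha T\,e_1 + \sqrt{1-\alpha^2}\bigl(\sum_t s_t\bigr)e_2$, and $\E[|\sum_t s_t|]\ge c_0\sqrt{T}$ for an absolute constant $c_0>0$ by Khintchine's inequality (equivalently, standard anti-concentration of a simple symmetric random walk), we obtain
\begin{align*}
\E_{\vec{s}}\Bigl[\min_{u\in\cB}\sum_{t=1}^T\langle c_t, u\rangle\Bigr]
= -\E\bigl[\,\|\textstyle\sum_t c_t\|\,\bigr]
\;\le\; -c_0\sqrt{1-\alpha^2}\,\sqrt{T}.
\end{align*}

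Subtracting the two bounds, the expected regret of $\cA$ over the random cost sequence is at least $c_0\sqrt{1-\alpha^2}\,\sqrt{T}-\alpha T$. With $T=\Theta(1/\alpha^2)$ and, say, $\alpha\le 1/2$, this evaluates to $\Omega(1/\alpha)$; the case $\alpha\ge 1/2$ is trivial since $1/\alpha=O(1)$ and an $\Omega(1)$ regret lower bound in dimension two is immediate. By the probabilistic method, there exists a realization $\vec{s}^\ast$, and hence a specific cost sequence $\vec{c}$, on which $\cA$ already incurs regret $\Omega(1/\alpha)$, yielding property~(ii). The only nontrivial ingredient is the anti-concentration lower bound on $\E[|\sum_t s_t|]$; everything else is the independence of $x_t$ from $s_t$, the choice of $T$, and arithmetic, so I do not anticipate a real obstacle.
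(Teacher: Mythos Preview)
Your proposal is correct and uses essentially the same construction as the paper: $h_t=e_1$ for all $t$ and $c_t=\alpha e_1 \pm \sqrt{1-\alpha^2}\,e_2$ with independent random signs, then bound the algorithm's expected reward by $\alpha T$ and the comparator's by $\|\sum_t c_t\|$. The only differences are cosmetic---you fix $T=\Theta(1/\alpha^2)$ and invoke Khintchine to lower-bound $\E\bigl[\|\sum_t c_t\|\bigr]$, while the paper lets $T$ be large and rationalizes $\sqrt{\alpha^2 T^2 + T(1-\alpha^2)}-\alpha T$ directly; if anything your handling of the expectation is slightly cleaner.
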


\ifarxiv
\begin{proof}
Let $e_0$ and $e_1$ be orthogonal unit vectors, and let $h_t = e_0$ for all $t$. Suppose that $c_t = \alpha e_0 \pm \sqrt{1-\alpha^2} e_1$ for all $t$, where the sign is chosen uniformly at random.  Note that any online algorithm has expected reward at most $\alpha T$ (since it cannot gain anything in the $e_1$ direction, so it is best to place all the mass along $e_0$).

On the other hand, we have

\mymath{
\mathbb{E} \left[ \norm{\sum_{t = 1}^T c_t}^2 \right] = \alpha^2 T^2 + T (1-\alpha^2), 
}
and thus the optimal vector in hindsight achieves a reward $\sqrt{\alpha^2 T^2 + T(1-\alpha^2)}$. Thus the regret is

\mymath{
\frac{T (1-\alpha^2)}{\alpha T + \sqrt{\alpha^2 T^2 + T(1-\alpha^2)}} \ge \frac{T (1-\alpha^2)}{2\alpha T + \sqrt{T(1-\alpha^2)}} \geq \frac{1}{\alpha},
}
for sufficiently large $T$.
\end{proof}

\fi

%%%%%
% ASHOK THINKS THIS BOUND IS NOT HELPFUL
%%%%%
% Finally, we show that the dependence on $\sum_{t=1}^T \max(0, -\langle c_t, h_t\rangle)$, that appears in Theorems \ref{thm:betteralphasinglehint} and \ref{thm:mainconstrained} is tight, even in the single-hint case:
% %
% \begin{theorem}
% Consider a two-dimensional problem with $c_t=(\pm \alpha, \pm\sqrt{1-\alpha^2})$, where each sign chosen uniformly at random at each $t$. Let $h_t = (1,0)$ for all $t$. Then
% (i) $\E[\sum_{t=1}^T \max(0, -\langle c_t, h_t\rangle)]=\alpha T/2$ and (ii) the expected regret of any online algorithm is at least $\sqrt{\frac{\E[\sum_{t=1}^T \max(0, -\langle c_t, h_t\rangle)]}{\alpha}}$.
% \end{theorem}
% \begin{proof}
% A quick computation verifies $\E[\sum_{t=1}^T \max(0, -\langle c_t, h_t\rangle)]=\alpha T/2$. Since $\E[c_t]=0$, the expected cost of the algorithm is $0$. But $\E[\|\sum_{t=1}^T c_t\|] = \sqrt{T}$, so that the expected regret is $\sqrt{T}$.
% \end{proof}

\section{Combining learners}
\label{sec:combiner}

\ifarxiv
\begin{figure}
\else
\begin{wrapfigure}{R}{0.5\textwidth}
\begin{minipage}{0.5\textwidth}
\fi
\centering
\vspace*{-5mm}
\begin{algorithm}[H]
\caption{Deterministic combiner $\cC_{\det}$.}\label{alg:combiner}
\begin{algorithmic}
   \STATE{\bfseries Input: } Online algorithms $\cA_1,\dots,\cA_K$
   \STATE Reset $\cA_1$
   \STATE Set $i \leftarrow 1$, $\gamma \leftarrow 1$, $r \leftarrow 0$, $\tau \leftarrow 1$, $r^{i,\gamma}_0 \leftarrow 0$
   \FOR{$t=1,\dots, T$}
   \STATE Get $y_\tau$ from $\cA_i$ and respond $x_t \leftarrow y_\tau$
   \STATE Get cost $c_t$, define $g_\tau \leftarrow c_t$
   \STATE Send $g_\tau$ to $\cA_i$ as $\tau$th cost
   \STATE Set $r^{i,\gamma}_\tau \leftarrow \sup_{u\in \cB}\sum_{\tau'=1}^\tau \langle g_{\tau'}, y_{\tau'}- u\rangle$
   \IF{$r^{i,\gamma}_\tau>\gamma$}
   \IF{$i = K$}
   \STATE Set $\gamma \leftarrow 2\gamma$
   \ENDIF
   \STATE Set $i \leftarrow (i$ mod $K) + 1$
   \STATE Set $\tau \leftarrow 1$
   \STATE Set $r^{i,\gamma}_0 \leftarrow 0$
   \STATE Reset $\cA_i$
%   \IF{$i\ne K$}
%   \STATE Set $i=i+1$.
%   \STATE Set $\tau=1$.
%   \STATE Set $r^{i,\gamma}_0=0$.
%   \STATE (Re-)Initialize $A_i$.
%   \ELSE
%   \STATE Set $i=1$.
%   \STATE Set $\tau=1$.
%   \STATE Set $r^{i,\gamma}_0=0$.
%   \STATE (Re-)Initialize $A_i$.
%   \STATE Set $\gamma = 2\gamma$
%   \ENDIF
   \ENDIF 
   \STATE Set $\tau \leftarrow \tau + 1$
   \ENDFOR
\end{algorithmic}
\end{algorithm}
\ifarxiv
\end{figure}
\else
\vspace*{-10mm}
\end{minipage}
\end{wrapfigure}
\fi

In Section~\ref{sec:constrained}, we presented an algorithm for online learning with multiple hints.  However, the algorithm required knowing $\alpha$, the desired correlation between a hint $h$ and the cost vector $c_t$.  In this section, we eliminate this assumption. To do this, we design a generic way to combine incomparable-in-foresight regret guarantees obtained by different algorithms and essentially get the best regret among them in hindsight. With this combiner, handling unknown $\alpha$ is easy: consider $\manyhint_{\alpha}$ for different values of $\alpha$ and apply the combiner to get the best among them.

The results in this section apply in the constrained setting and to both the hints and the classical no-hints case (see \cite{cutkosky2019combining} for analogous results that apply only in the unconstrained setting). These combiner algorithms themselves are of independent interest and lead to other applications in the constrained online learning setting that we elaborate in \ifarxiv Section \else Appendix\fi~\ref{sec:disc}.

For technical reasons, we need the following definition of a ``monotone regret bound''. Essentially all regret bounds known for online linear optimization satisfy this definition.
\begin{defn}[Monotone regret bound]
An online learning algorithm $\cA$ is associated with a \emph{monotone regret bound} $\Rc{a}{b}{\vec{c}}$, if $\Rcc{\cdot, \cdot}$ is such that when $\cA$ is run on only the costs $c_a,\dots,c_b$, producing outputs $x_a,\dots,x_b$, we have the guarantee:
\[
\sup_{u\in \cB}\sum_{t=a}^{b} \langle c_t, x_t - u\rangle \le  \Rc{a}{b}{\vec{c}},
\]
% \mymath{
%     \sup_{u\in \cB}\sum_{t=a}^{b} \langle c_t, x_t - u\rangle \le  \Rc{a}{b}{\vec{c}},
% }
and further it satisfies $\Rc{a'}{b'}{\vec{c}} \le \Rc{a}{b}{\vec{c}}$ for all sequences $\vec{c}$ whenever $[a',b']\subseteq [a,b]$.
\end{defn}
Note that if an algorithm $\cA$ has a monotone regret bound $\Rcc{\cdot, \cdot}$, then $\regret_{\cA}(\cB, \vec{c}) = \Rc{1}{T}{\vec{c}}$.

\subsection{Deterministic combiner}

We first design a simple deterministic algorithm $\cC_{\det}$ that combines $K$ online learning algorithms with monotone regret bounds and obtains a regret that is at most $K$ times the regret suffered by the best algorithm on any given cost sequence.  The combiner starts with an initial guess of the regret $\gamma$ and guesses that the first algorithm is the best, playing its predictions. It keeps trusting the current choice of the best algorithm until the regret it incurs exceeds the current guess $\gamma$; once that happens, it chooses the next algorithm.  Once all the algorithms have been tried, it doubles the guess $\gamma$ and starts over.  
%Intuitively, the combiner  ``guesses'' the regret $\gamma$ incurred by the best algorithm.  For each guess, it attempts to find the best algorithm by trying to follow each algorithm until it incurs a regret more than $\gamma$.   Algorithm~\ref{alg:combiner} gives a formal description of the deterministic combiner algorithm.  
Notice that this does not require knowledge of the bounds $\Rcci{i}$; these can be replaced with the ``true'' regret bounds, rather than simply the best bound that present analysis is capable of delivering.

\begin{restatable}{theorem}{Kcombiner}\label{thm:Kcombiner}
Suppose $\cA_1,\dots,\cA_K$ are deterministic OLO algorithms that are associated with monotone regret bounds $\Rcci{1},\dots,\Rcci{K}$. Suppose $\forall t$, $\sup_{x,y\in \cB} \langle c_t, x-y\rangle \le 1$. Then,
we have:

\mymath{
    \regret_{\cC_{\det}}(\cB, \vec{c})
    \le K\left(4 + %\log_2\left(\min_i \regret_{\cA_i} (\cB, \vec{c})\right) + 
    4\min_i \regret_{\cA_i} (\cB, \vec{c}) \right).
}
\end{restatable}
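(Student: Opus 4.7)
The plan is to show that the ``guess'' $\gamma$ maintained by $\cC_{\det}$ never grows substantially beyond $R^* := \min_i \regret_{\cA_i}(\cB, \vec{c})$, and then to sum the regret contributed by each sub-epoch as a geometric series in $\gamma$.

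First, I would establish a per-sub-epoch bound. Each sub-epoch is a contiguous stretch of time steps during which $\cC_{\det}$ runs a single algorithm $\cA_i$ from a fresh reset under threshold $\gamma$. Because any single step can increase $r^{i,\gamma}_\tau$ by at most $\sup_{x,y\in\cB}\langle c_t,x-y\rangle \le 1$, the sub-epoch either reaches $t=T$ with $r^{i,\gamma}_\tau \le \gamma$, or ends exactly when $r^{i,\gamma}_\tau$ first exceeds $\gamma$, in which case $r^{i,\gamma}_\tau \le \gamma + 1$. By definition of $r^{i,\gamma}_\tau$, this is precisely the worst-case regret of $\cC_{\det}$'s plays over that sub-epoch against any comparator $u\in\cB$, so summing the sub-epoch regrets upper-bounds $\regret_{\cC_{\det}}(\cB, \vec{c})$.

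Second, the key structural observation invokes monotonicity. Let $i^* = \argmin_i \regret_{\cA_i}(\cB, \vec{c})$, so $\Rci{i^*}{1}{T}{\vec{c}} \le R^*$. By monotonicity, $\Rci{i^*}{a}{b}{\vec{c}} \le R^*$ for every $[a,b]\subseteq [1,T]$. Therefore, whenever $\cC_{\det}$ runs $\cA_{i^*}$ after a reset under a threshold $\gamma \ge R^*$, the tracked regret stays $\le R^* \le \gamma$ forever, and $\cC_{\det}$ never triggers a switch off $\cA_{i^*}$. From this I extract the main claim: once $\gamma$ first attains a value $\gamma^* \ge R^*$, the cycle restarts at $\cA_1$ (because the doubling step also resets $i \leftarrow 1$) and proceeds through at most $K-1$ further sub-epochs before it either reaches $\cA_{i^*}$ and stays there until $t=T$, or gets stuck even earlier on some $\cA_j$ that also keeps its sub-epoch regret below $\gamma^*$. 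In either case, $\gamma$ never doubles past $\gamma^*$, so the final threshold satisfies $\gamma_{\mathrm{last}} \le \max(1, 2R^*)$.

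Finally, I would sum the per-sub-epoch contributions. The algorithm visits thresholds $\gamma \in \{1, 2, 4, \ldots, \gamma_{\mathrm{last}}\}$ with at most $K$ sub-epochs per threshold, each contributing at most $\gamma + 1$. Hence
\[
\regret_{\cC_{\det}}(\cB, \vec{c}) \;\le\; K \sum_{j=0}^{\lfloor \log_2 \gamma_{\mathrm{last}}\rfloor} (2^j + 1) \;\le\; K\bigl(2\gamma_{\mathrm{last}} + \log_2 \gamma_{\mathrm{last}} + 1\bigr),
\]
and substituting $\gamma_{\mathrm{last}} \le \max(1, 2R^*)$ together with $\log_2 R^* \le R^*$ to absorb the logarithmic term yields the claimed $K(4 + 4R^*)$ bound. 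The main obstacle is the monotonicity step: one must be careful that the sub-interval $\cA_{i^*}$ sees after a combiner reset is indeed contained in $[1,T]$ so that $\Rci{i^*}{a}{b}{\vec{c}} \le R^*$ applies. Everything else is routine geometric accounting, and even the edge case in which the combiner terminates before $\gamma$ ever reaches $R^*$ is handled by the same partial-sum bound.
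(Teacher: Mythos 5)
Your proposal is correct and takes essentially the same route as the paper's proof: bound each sub-phase's regret by $\gamma+1$ using the $1$-bounded costs, use monotonicity of the best algorithm's bound to show it can never trip a threshold $\gamma \ge \min_i \regret_{\cA_i}(\cB,\vec c)$, so the final $\gamma$ (equivalently, the number of doubling phases in the paper's phrasing) is capped, and then sum a geometric series over at most $K$ sub-phases per phase. The only blemish is the last absorption step (using $\log_2 R^* \le R^*$ literally gives $K(5R^*+2)$ rather than $K(4R^*+4)$ for large $R^*$), but this is a constant-level looseness of exactly the same kind as the paper's own final ``case analysis'' inequality and does not affect the substance of the argument.
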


\begin{proof}[Proof sketch] We give a brief sketch here and defer the formal proof to Appendix~\ref{app:combiner}. We can divide the operation of Algorithm~\ref{alg:combiner} into phases in which $\gamma$ is constant. In each phase, Algorithm~\ref{alg:combiner} incurs a regret of at most $\gamma + 1$ from each of the $K$ algorithms for a total regret of at most $K(\gamma + 1)$. Let $P$ denote the total number of phases and let $j = \argmin_i  \Rci{i}{1}{T}{\vec{c}}$ be the algorithm with the least total regret. In the $(P-1)$th phase, algorithm $A_j$ must have incurred a regret of at least $2^{P-2}$ (otherwise we would not have the $P$th phase). Since we assume that $\Rcci{j}$ is a monotone regret bound, it follows that $\min_i  \Rci{i}{1}{T}{\vec{c}} \geq 2^{P-2}$ and hence $P \leq \max(1, 2 + \log_2(\min_i  \Rci{i}{1}{T}{\vec{c}}))$.
Since $\gamma = 2^{p-1}$ in phase $p$, we can bound the total regret incurred by Algorithm~\ref{alg:combiner} as
\begin{align*}
    \sup_{u\in \cB} \sum_{t=1}^T\langle c_t, x_t - u\rangle 
    &\le \enspace \sum_{p=1}^P K(2^{p-1} + 1)
    \enspace \le \enspace K(P + 2^{P})\le K2^{P+1}\\
    &\le K\left(4 + 4\min_i  \Rci{i}{1}{T}{\vec{c}}\right).\qedhere
\end{align*}
\end{proof}

% \begin{wrapfigure}{R}{0.5\textwidth}
% \begin{minipage}{0.5\textwidth}
% \centering
% \begin{algorithm}[H]
% \caption{Randomized combiner.}\label{alg:logkcombiner}
% \begin{algorithmic}
%   \STATE{\bfseries Input:} Online algorithms $\cA_1,\dots,\cA_K$
%   \STATE Reset $\cA_1$
%   \STATE Set $\gamma \leftarrow 1$, $\tau \leftarrow 1$
%   \STATE Initialize the candidate indices $C \leftarrow [K]$
%   \STATE Choose index $i$ uniformly at random from $C$
%   \FOR{$t=1, \dots, T$}
%   \FOR{$j\in C$}
%   \STATE Get $y^j_\tau$, the $\tau$th output of $\cA_j$ 
%   \ENDFOR
%   \STATE Respond $x_t \leftarrow y^i_\tau$
%   \STATE Get cost $c_t$, define $g_\tau \leftarrow c_t$
%   \FOR{$j\in C$}
%   \STATE Send $g_\tau$ to $\cA_j$ as $\tau$th cost
%   \STATE Set $r^{j,\gamma}_\tau \leftarrow \sup_{u\in \cB}\sum_{\tau'=1}^\tau \langle g_{\tau'}, y^j_{\tau'}- u\rangle$
%   \IF{$r^{j,\gamma}_\tau >\gamma$}
%   \STATE Set $C \leftarrow C \setminus \{j\}$
%   \ENDIF
%   \ENDFOR
%   \IF{$i\notin C$}
%   \IF{$C=\emptyset$}
%   \STATE Set $C \leftarrow [K]$
%   \STATE Set $\gamma \leftarrow 2\gamma$
%   \ENDIF
%   \STATE Set $\tau \leftarrow 1$
%   \STATE Reset $\cA_j$ for all $j\in C$
%   \STATE Select index $i$ uniformly at random from $C$
%   \ENDIF
%   \ENDFOR
% \end{algorithmic}
% \end{algorithm}
% \end{minipage}
% \end{wrapfigure}

\subsection{Randomized combiner}\label{sec:combiner-random}

The deterministic combiner $\cC_{\det}$, while achieving the best regret among $\cA_1, \ldots, \cA_K$, incurs a factor $K$.  We now show that using randomization, this factor can be made $O(\log K)$ in expectation. 

Intuitively, $\cC_{\det}$ incurs the factor $K$ since it might be unlucky and have to cycle through all the $K$ algorithms even after it correctly guesses $\gamma$.  We can avoid this worst-case behavior by selecting the base algorithm uniformly at random, rather than in a deterministic order. We formally describe this randomized combiner $\cC_{\rand}$ in Algorithm~\ref{alg:logkcombiner} in Appendix \ref{app:combiner}.  Informally, in each phase with constant $\gamma$, at each time step, $\cC_{\rand}$ simulates all the $K$ algorithms and maintains a candidate set $C$ of algorithms that have incurred a regret of at most $\gamma$. Once the current algorithm incurs a regret of $\geq \gamma$, $\cC_{\rand}$ selects the next algorithm to be one from the set $C$ uniformly at random. Suppose the algorithms in $C$ are ranked by the first time they incur a regret bound of $\gamma$. Since an algorithm $\cA_i$ is chosen uniformly at random, in expectation, by the time $\cA_i$ incurs a regret of $\gamma$, half of the algorithms in $C$ have already incurred at least $\gamma$ regret and thus the size of $C$ halves at each step. Thus, we can argue that we only cycle through $O(\log K)$ base algorithms in each phase. We defer the formal proof of the following theorem to Appendix~\ref{app:combiner}.

% Suppose we have a correct guess for $\gamma$, i.e., one of the
% algorithms (say the i'th one, $A_i$) has regret bound $R_i (1, T) \le
% \gamma$. In this case, the goal is to avoid cycling through all the
% algorithms.

% The combination procedure is simple: start with a learner $A_j$ chosen u.a.r.; if the algorithm's regret bound never exceeds $2 \gamma$, we are
% already done. Otherwise, when $R_j(1, t)$ exceeds $2\gamma$, we jump to another algorithm $A_i$ whose regret bound so far satisfies $R_{i}(1,t)\le \gamma$ and continue.

% Let us first consider the case when all but $A_i$ have $R_j(1,T) >= 2\gamma$.
% The point is that each such $j$ has a specific time $t_j$ when the regret
% bound crosses $2\gamma$. When we choose a random algorithm to switch to,
% we will likely choose a $j$ such that $t_j$ is larger than $t_{j'}$ for
% ``many'' of the remaining candidates, ideally half the remaining
% candidates. So when we reach time $t_j$, the number of candidates left
% would only be half what we had when we moved to $A_j$.

% If we have multiple many $A_j$'s with $R_j(1,T) < 2\gamma$, the above argument should be modified slightly, but now it suffices to end up with some algorithm with regret bounded by $2\gamma$. This is equivalent to setting $t_j = \infty$ (or $T$) in the argument above.

\begin{restatable}{theorem}{logkcombiner}\label{thm:logkcombiner}
Suppose $\cA_1,\dots,\cA_K$ are deterministic OLO algorithms with monotone regret bounds $\Rcci{1},\dots,\Rcci{K}$. Suppose for all $t$, $\sup_{x,y\in \cB} \langle c_t, x-y\rangle \le 1$. Then for any fixed sequence $\vec{c}$ of costs (i.e., an oblivious adversary), Algorithm \ref{alg:logkcombiner} guarantees:

\mymath{
    \E\left[ \regret_{\cC_{\rand}}(\cB, \vec{c})\right] 
    \le \log_2(K+1) \cdot \left( 4+
    %\log_2 \left(\min_i \regret_{\cA_i}(\cB, \vec{c}) \right) + 
    4\min_i \regret_{\cA_i}(\cB, \vec{c}) \right).
}
Further, if $\vec{c}$ is allowed to depend on the algorithm's randomness (i.e., an adaptive adversary), then

\mymath{
    \regret_{\cC_{\rand}}(\cB, \vec{c})
    \le K\left( 4+
    %\log_2 \left(\min_i \regret_{\cA_i}(\cB, \vec{c}) \right) + 
    4\min_i \regret_{\cA_i}(\cB, \vec{c}) \right).
}
\end{restatable}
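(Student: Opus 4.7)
The plan is to mirror the phase-based analysis of the deterministic combiner in Theorem~\ref{thm:Kcombiner}, but sharpen the per-phase factor from $K$ down to $O(\log K)$ in expectation by exploiting the uniformly-at-random draws.

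I would begin by partitioning the execution of $\cC_{\rand}$ into \emph{phases}, each with a constant threshold $\gamma = 2^{p-1}$. Within a phase, each time $\cC_{\rand}$ commits to a base algorithm $\cA_i$, it uses $\cA_i$ until the regret tracked from its reset at commitment first exceeds $\gamma$; then $\cC_{\rand}$ draws a fresh candidate uniformly at random from the set $C$ of algorithms that have not yet dropped out. The per-step bound $\sup_{x,y\in\cB}\langle c_t,x-y\rangle \le 1$ then implies that each committed $\cA_i$ contributes at most $\gamma+1$ to the phase regret of $\cC_{\rand}$, so the phase regret is at most (algorithms committed in the phase)\,$\cdot\,(\gamma+1)$.

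Next, I would bound the number of phases $P$ exactly as in the proof of Theorem~\ref{thm:Kcombiner}: phase $P-1$ ended only because \emph{every} one of the $K$ base algorithms exceeded the threshold $2^{P-2}$ within that phase, and monotonicity of $\Rcci{i}$ promotes this into $\min_i\regret_{\cA_i}(\cB,\vec{c})\ge 2^{P-2}$, so $2^{P+1}\le 8\min_i\regret_{\cA_i}$.

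The main new ingredient is bounding the expected per-phase count. Fix a phase. Since the cost sequence is oblivious and each $\cA_i$ is deterministic, each $\cA_i$ has a well-defined deterministic \emph{failure time} $f_p(i)$: the earliest time within the phase at which its cumulative regret, were it simulated from the start of the phase, would exceed $\gamma$. The candidate set $C$ is maintained so that $\cA_i$ is removed exactly once $f_p(i)$ has been passed; crucially, the $f_p(i)$ are independent of $\cC_{\rand}$'s random coins. Ranking the elements of $C$ by $f_p(\cdot)$, the uniformly random draw from $C$ occupies a uniformly random rank, and at the moment that drawn algorithm is purged, every lower-ranked algorithm has already been purged. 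Hence $|C|$ shrinks, in expectation, to at most $(|C|-1)/2$ at each draw, yielding an expected per-phase count of at most $\log_2(K+1)$. Combining the three ingredients and plugging in the bound on $P$ gives
\[
\E[\regret_{\cC_{\rand}}(\cB,\vec{c})]\le \log_2(K+1)\sum_{p=1}^{P}(2^{p-1}+1)\le \log_2(K+1)\cdot 2^{P+1}\le \log_2(K+1)\bigl(4+4\min_i\regret_{\cA_i}\bigr),
\]
which is the oblivious-adversary bound. For an adaptive adversary the $f_p(i)$ become random variables measurable with respect to $\cC_{\rand}$'s past coins, and the halving step collapses; I would then fall back to the deterministic per-phase count of $K$, recovering the statement of Theorem~\ref{thm:Kcombiner} verbatim.

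The hard part will be formalizing the halving recurrence cleanly: defining the failure times via a parallel simulation so that the drawn algorithm's rank in $C$ is genuinely uniform, handling the boundary case in which $|C|$ never empties within the phase (which only helps the count), and pinning down that the oblivious-adversary hypothesis is exactly what decouples the $f_p(i)$ from the random bits of $\cC_{\rand}$. The phase counting, the per-phase regret bookkeeping, and the adaptive fallback are all direct adaptations of the deterministic proof.
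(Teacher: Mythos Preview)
Your plan matches the paper's approach: phase decomposition, per-sub-phase regret at most $\gamma+1$, a halving argument to bound the expected number of sub-phases by $\log_2(K+1)$, and the same phase-count bound as in the deterministic case. There is, however, an inconsistency in your handling of failure times that is exactly the crux of the argument. You first (correctly) say the committed $\cA_i$ is used ``until the regret tracked from its reset at commitment first exceeds $\gamma$,'' which is what gives the $\gamma+1$ bound. But you then define $f_p(i)$ as the failure time ``were it simulated from the start of the phase'' and assert these are independent of the combiner's random coins. In Algorithm~\ref{alg:logkcombiner} all survivors are reset at the start of \emph{every} sub-phase, so the relevant failure times depend on when the current sub-phase began, which is itself a function of earlier random draws. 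You cannot have both the $\gamma+1$ bound (which needs per-sub-phase tracking) and coin-independent phase-level failure times; if you really simulated from the phase start and never reset, the regret accrued by the committed algorithm over a single sub-phase need not be bounded by $\gamma+1$.

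The paper resolves this by defining $F(i,t)$ as the failure time of $\cA_i$ when the sub-phase starts at time $t$, observing that \emph{conditioned} on the current state $(S,t)$ the values $F(i,t)$ are deterministic (this is precisely where obliviousness of $\vec c$ and determinism of the $\cA_i$ enter), and setting up the recursion
\[
C(S,t)=1+\frac{1}{|S|}\sum_{i\in S}C\bigl(S\setminus\{j\in S: F(j,t)\le F(i,t)\},\,F(i,t)+1\bigr).
\]
Your halving intuition then becomes Lemma~\ref{thm:logrecursion}, which bounds $C(\{1,\dots,K\},t)\le\log_2(K+1)$ via Jensen on $\log_2$. This is exactly the ``hard part'' you flagged, so you identified the right pressure point; the fix is to make the halving argument conditional on the sub-phase start rather than to assert unconditional independence of the failure times. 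The remaining pieces of your proposal (the phase count, the final summation, and the adaptive-adversary fallback to $K$ sub-phases per phase) match the paper.
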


\subsection{Constrained setting: Unknown $\alpha$}
\label{sec:unknownalpha}

For any fixed $\alpha > 0$, Theorem~\ref{thm:mainconstrained} yields a monotone regret bound.  For $1 \leq i \leq \log T$, let $\cA_i$ denote the instantiation of Algorithm~\ref{alg:manyehintalpha} with $\alpha_i = 2^{-i}$. By Theorem~\ref{thm:mainconstrained}, each algorithm $\cA_i$ is associated with a monotone regret bound $\Rcci{i}(\cdot, \cdot)$ such that

\mymath{
\regret_{\cA_i}(\cB, \vec{c}) = 
\Rci{i}{1}{T}{\vec{c}} = O\left( \inf_{w \in \Delta_K}
\sqrt{\frac{(\log T) |B^{H(w)}_{\alpha_i}|}{\alpha_i}}+\frac{(\log T) + \sqrt{(\log T)(\log K)}}{\alpha_i}
\right).
}

Further since $|B_{\alpha_{i+1}}^{H(w)}| \leq |B_{\alpha_{i}}^{H(w)}|$, we have $\Rcci{i+1}(\cdot, \vec{c}) \leq 2 \Rcci{i}(\cdot, \vec{c})$. Applying Theorem \ref{thm:logkcombiner} on these $\log T$ algorithms thus yields the following result.

\begin{restatable}{theorem}{unknownalpha}\label{thm:unknownalpha}
Given a set $H = \{\vec{h}^1, \ldots, \vec{h}^K\}$ of hint sequences, there exists a randomized algorithm $\cA$ such that for any fixed sequence of cost vectors $\vec{c}$, the expected regret $\E[\regret_{\cA}(\cB, \vec{c}~\mid~H)] $ is at most:

\mymath{
     O\left(\inf_{\alpha} \inf_{w \in \Delta_K}
     \left\{
     (\log\log T) \cdot \left(\sqrt{\frac{(\log T) |B^{H(w)}_\alpha|}{\alpha}}+\frac{(\log T) + \sqrt{(\log T)(\log K)}}{\alpha}
     \right\}
     \right)\right).
}
\end{restatable}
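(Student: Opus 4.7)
The plan is to apply the randomized combiner $\cC_{\rand}$ from Theorem~\ref{thm:logkcombiner} to the $\lceil \log_2 T\rceil$ base algorithms $\cA_1,\dots,\cA_{\lceil\log_2 T\rceil}$, where $\cA_i = \manyhint_{\alpha_i}$ with $\alpha_i = 2^{-i}$, and then to relate the resulting minimum over the discrete grid $\{\alpha_i\}$ to the continuous infimum over $\alpha$ appearing in the theorem statement. Throughout, $\manyhint_{\alpha_i}$ is deterministic (it is FTRL composed with the deterministic $\singlehint_{\alpha_i/2}$), so the combiner of Theorem~\ref{thm:logkcombiner} applies without modification.

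The first step is to verify that each $\cA_i$ admits a \emph{monotone} regret bound. Tracing through the proof of Theorem~\ref{thm:mainconstrained}, when $\manyhint_{\alpha_i}$ is restarted at time $a$ and run through time $b$ the delivered bound is
\[
\Rci{i}{a}{b}{\vec{c}} \;=\; O\!\left(\inf_{w\in \Delta_K}\sqrt{\tfrac{(\log T)\,|B^{H(w)}_{\alpha_i}\cap[a,b]|}{\alpha_i}} \;+\; \tfrac{\log T + \sqrt{(\log T)(\log K)}}{\alpha_i}\right).
\]
The first term is monotone as $[a,b]$ shrinks since $|B^{H(w)}_{\alpha_i}\cap[a,b]|$ only decreases; the second is independent of $[a,b]$. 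An identical check for $\singlehint_{\alpha_i/2}$ follows from Theorem~\ref{thm:betteralphasinglehint}. This establishes the monotonicity hypothesis required by $\cC_{\rand}$.

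The second step is to invoke Theorem~\ref{thm:logkcombiner} with $\lceil \log_2 T\rceil$ base algorithms. Since the theorem fixes an oblivious adversary (a fixed sequence $\vec{c}$), the combiner yields an algorithm $\cA$ whose expected regret is at most $O(\log\log T) \cdot \min_i \Rci{i}{1}{T}{\vec{c}}$.

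The third step is to pass from the discrete minimum over $\{\alpha_i\}$ to the continuous infimum over $\alpha$. For any target $\alpha \in (0,1]$, let $i^\star$ be the largest index with $\alpha_{i^\star} \le \alpha$, so that $\alpha_{i^\star} \in [\alpha/2,\alpha]$. Since the set $B^{\vec{h}}_\alpha$ is monotone non-decreasing in $\alpha$ (a larger threshold only marks more hints as bad), we have $|B^{H(w)}_{\alpha_{i^\star}}| \le |B^{H(w)}_\alpha|$ for every $w$, and $1/\alpha_{i^\star} \le 2/\alpha$; thus $\Rci{i^\star}{1}{T}{\vec{c}}$ is within a constant factor of the bracketed expression of the theorem at $(\alpha,w)$, for every $w$. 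Taking the infimum over $\alpha$ and $w$ gives the claimed bound. Letting $i$ range only up to $\lceil\log_2 T\rceil$ is without loss because for $\alpha < 1/T$ the $(\log T)/\alpha$ term already exceeds the trivial $O(T)$ regret, so the infimum cannot improve past that range.

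The only non-routine step is the monotonicity verification in step one, and even this is essentially bookkeeping: every quantity in the bound of Theorem~\ref{thm:mainconstrained} depends on the time window $[a,b]$ either monotonically (through $|B^{H(w)}_{\alpha_i}\cap [a,b]|$) or not at all. The combiner's $\log\log T$ overhead and the constant-factor discretization loss are then mechanical, and the rest follows.
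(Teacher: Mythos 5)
Your proposal is correct and follows essentially the same route as the paper: instantiate $\manyhint_{\alpha_i}$ on the grid $\alpha_i = 2^{-i}$ for $i \le \log T$, note that Theorem~\ref{thm:mainconstrained} gives each a monotone regret bound, apply the randomized combiner of Theorem~\ref{thm:logkcombiner} to incur the $O(\log\log T)$ overhead, and use monotonicity of $|B^{H(w)}_\alpha|$ in $\alpha$ to pass from the grid to the continuous infimum at a constant-factor loss. Your explicit monotonicity check and the justification for truncating the grid at $\alpha \ge 1/T$ are details the paper leaves implicit, but the argument is the same.
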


\ifarxiv
\section{Other applications of the combiner}
\label{sec:disc}

In this section we discuss a couple of direct applications of our combiner algorithms to other settings.

\subsection{Adapting to different norms}
\label{sec:normadaptive}

% \textcolor{red}{Need citations here.}
For any $\ell_p$-norm, $p\in(1,2]$, there is an algorithm that guarantees regret $\sup_{u\in \cB} \frac{\|u\|_p}{\sqrt{p-1}} \sqrt{\sum_{t=1}^T \|c_t\|_q^2}$ where $q$ is such that $\frac{1}{p} + \frac{1}{q} = 1$ (such bounds can be obtained by e.g., the adaptive FTRL analysis described in~\cite{mcmahan2017survey}, or see~\cite{shalev2012online} for a non-adaptive version). However, it is not clear which $p$-norm yields the best regret guarantee until we have seen all the costs.  Fortunately, these are monotone regret bounds, so by making a discrete grid of $O(\log d)$ $p$-norms in a $d$-dimensional space we can obtain the best of all these bounds in hindsight up to an additional factor of $\log d$ in the regret. Specifically:
\begin{restatable}{theorem}{manyp}\label{thm:manyp}
Let $K=\lfloor (\log d)/2\rfloor$, let $q_0=2$ and $\frac{1}{q_i}=\frac{1}{q_{i-1}} - \frac{1}{\log d}$ for $i\le K$. Define $p_i$ by $\frac{1}{q_i}+\frac{1}{p_i}=1$. For each $i\in [K]$, let $\cA_i$ be an online learning algorithm that guarantees regret $\sup_{u\in \cB} \frac{\|u\|_{p_i}}{\sqrt{p_i-1}} \sqrt{\sum_{t=1}^T \|c_t\|_{q_i}^2}$. Then combining these algorithms using Algorithm~\ref{alg:combiner} yields a worst-case regret bound of:

\mymath{
    \E[\regret_{\cA}(\cB, \vec{c})] 
    \enspace \le \enspace 
    O\left( (\log \log d) \cdot \inf_{p}\sup_{u\in \cB} \frac{\|u\|_p}{\sqrt{p-1}} \sqrt{\sum_{t=1}^T \|c_t\|_q^2}\right).
}
\end{restatable}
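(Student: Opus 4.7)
The plan is to invoke the randomized combiner of Theorem~\ref{thm:logkcombiner} on the $K=\lfloor(\log d)/2\rfloor$ algorithms $\cA_1,\dots,\cA_K$. Since that theorem contributes only a factor of $\log_2(K+1)=O(\log\log d)$ above $\min_i \regret_{\cA_i}(\cB,\vec{c})$, the task reduces to showing (a) that each $\cA_i$'s bound is monotone, and (b) that the finite grid $\{p_1,\dots,p_K\}$ is fine enough: for every target $p\in(1,2]$, some index $i$ achieves $\regret_{\cA_i}(\cB,\vec{c})\le O(1)\cdot \sup_{u\in\cB}\frac{\|u\|_p}{\sqrt{p-1}}\sqrt{\sum_{t=1}^T\|c_t\|_q^2}$. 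Monotonicity in (a) is immediate from the explicit formula for $\Rcci{i}$, which is manifestly nondecreasing as the time interval is enlarged.

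For (b), given $p\in(1,2]$ with dual exponent $q\in[2,\infty)$, I would take $i$ to be the largest index with $1/q_i\ge 1/q$; by the grid spacing $1/\log d$ this forces $0\le 1/q_i-1/q\le 1/\log d$, equivalently $q_i\le q$ and $p_i\ge p$. The three ingredients of $\cA_i$'s bound are then each within a constant of their counterparts at the target $p$: (i) $\|u\|_{p_i}\le \|u\|_p$, since $\ell_r$-norms on $\bR^d$ are nonincreasing in $r$; (ii) $\|c_t\|_{q_i}\le d^{1/q_i-1/q}\|c_t\|_q\le d^{1/\log d}\|c_t\|_q=O(1)\cdot \|c_t\|_q$, combining the standard inequality $\|x\|_a\le d^{1/a-1/b}\|x\|_b$ for $a\le b$ with $d^{1/\log d}=O(1)$; and (iii) $1/\sqrt{p_i-1}\le 1/\sqrt{p-1}$, since $p_i\ge p$ implies $p_i-1\ge p-1$. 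Multiplying these together bounds $\regret_{\cA_i}(\cB,\vec{c})$ by $O(1)$ times the target regret. Invoking Theorem~\ref{thm:logkcombiner} and taking the infimum over $p\in(1,2]$ then yields the claim.

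The main subtlety is step (iii): with only the two-sided bound $|p_i-p|\le 1/\log d$, the factor $\sqrt{p-1}/\sqrt{p_i-1}$ could blow up as $p\to 1^+$, where a fixed additive grid spacing dwarfs $p-1$. The fix is the one-sided choice $p_i\ge p$ (equivalently $q_i\le q$); after this, $1/\sqrt{p_i-1}$ is automatically at most $1/\sqrt{p-1}$, regardless of how close $p$ is to $1$. With this direction fixed, the only remaining multiplicative loss is the constant norm-dilation $d^{1/\log d}$ in (ii), which is why the coarse grid of $O(\log d)$ exponents already suffices to be within $O(1)$ of the optimal $p$ pointwise, and hence within $O(\log\log d)$ after the randomized combine.
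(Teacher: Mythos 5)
Your proposal matches the paper's intended argument: the paper justifies this theorem only by the one-line observation that the $\ell_p$ bounds are monotone and that an $O(\log d)$-point grid of exponents loses only a constant factor pointwise, then invokes the randomized combiner (Theorem~\ref{thm:logkcombiner}, giving the $\log_2(K+1)=O(\log\log d)$ factor; the statement's reference to Algorithm~\ref{alg:combiner} should really be to Algorithm~\ref{alg:logkcombiner}, as you correctly inferred from the expectation and the $\log\log d$), and your write-up supplies exactly the missing details, with the one-sided rounding $q_i\le q$ being the right way to keep $1/\sqrt{p_i-1}\le 1/\sqrt{p-1}$ as $p\to 1^+$. One small boundary case to patch: when $p\in(p_1,2]$ (so $1/q>1/q_1$) your prescribed index does not exist; taking $i=1$ still works, since then $0\le 1/p_1-1/p\le 1/\log d$ gives $\|u\|_{p_1}\le d^{1/\log d}\|u\|_p=O(\|u\|_p)$, $\|c_t\|_{q_1}\le\|c_t\|_q$, and $p_1-1=\Omega(1)$, so the loss is again only $O(1)$.
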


\subsection{Simultaneous Adagrad and dimension-free bounds}
\label{sec:dimensionfree-adagrad}
% \textcolor{red}{Need citations here.}
The adaptive online gradient descent algorithm of~\cite{hazan2008adaptive} obtains the regret bound $D_2\sqrt{\sum_{t=1}^T \|c_t\|_2^2}$, where $D_2$ is the $\ell_2$-diameter of $\cB$. In contrast, the Adagrad algorithm obtains the bound $D_\infty \sum_{i=1}^d \sqrt{\sum_{t=1}^T c_{t,i}^2}$ where $D_\infty$ is the $\ell_\infty$-diameter of $\cB$ and $c_{t,i}$ is the $i$th component of $c_t$~\cite{duchi2011adaptive}. Adagrad's bound can be extremely good when the $c_t$ are sparse, but might be much worse than the adaptive online gradient descent bound otherwise. However, both bounds are clearly monotone, so by applying our combiner construction, we have:
\begin{restatable}{theorem}{dimfreeadagrad}\label{thm:dimfreeadagrad}
There is an algorithm $\cA$ such that for any sequence of vectors $\vec{c}$, the regret is at most:

\mymath{
    \E[\regret_{\cA}(\cB, \vec{c})] 
    \enspace \le \enspace 
    O\left(\min\left\{ D_2\sqrt{\sum_{t=1}^T \|c_t\|_2^2},
    \enspace 
    D_\infty \sum_{i=1}^d \sqrt{\sum_{t=1}^T c_{t,i}^2}\right\} \right).
}
\end{restatable}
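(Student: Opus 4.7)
The plan is to apply the randomized combiner of Theorem~\ref{thm:logkcombiner} to exactly two base algorithms: $\cA_1$, the adaptive online gradient descent of~\cite{hazan2008adaptive} with regret bound $\Rcci{1}([a,b],\vec{c}) = D_2\sqrt{\sum_{t=a}^b \|c_t\|_2^2}$, and $\cA_2$, Adagrad with regret bound $\Rcci{2}([a,b],\vec{c}) = D_\infty \sum_{i=1}^d \sqrt{\sum_{t=a}^b c_{t,i}^2}$. Since we only combine $K=2$ algorithms, the $\log_2(K+1)$ factor from Theorem~\ref{thm:logkcombiner} is a constant and disappears into the $O(\cdot)$.

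First I would verify that both regret bounds are monotone in the sense of the definition in Section~\ref{sec:combiner}. For any $[a',b']\subseteq [a,b]$, both $\sum_{t=a'}^{b'}\|c_t\|_2^2 \le \sum_{t=a}^{b}\|c_t\|_2^2$ and $\sum_{t=a'}^{b'} c_{t,i}^2 \le \sum_{t=a}^{b} c_{t,i}^2$ for each coordinate $i$, so $\Rcci{1}$ and $\Rcci{2}$ are both non-decreasing in the interval, making them valid monotone regret bounds. I would also note that after a standard scaling (assuming $\sup_{x,y\in\cB}\langle c_t,x-y\rangle\le 1$), the hypothesis of Theorem~\ref{thm:logkcombiner} is met; this is the usual boundedness assumption already used throughout Section~\ref{sec:combiner} and does not alter the form of the final bound.

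Applying Theorem~\ref{thm:logkcombiner} to $\cA_1,\cA_2$ then yields
\begin{align*}
    \E[\regret_\cA(\cB,\vec{c})]
    &\le \log_2(3)\cdot\left(4 + 4\min_{i\in\{1,2\}}\regret_{\cA_i}(\cB,\vec{c})\right)\\
    &\le O\left(\min\left\{D_2\sqrt{\sum_{t=1}^T\|c_t\|_2^2},\ D_\infty\sum_{i=1}^d\sqrt{\sum_{t=1}^T c_{t,i}^2}\right\}\right),
\end{align*}
which is exactly the claimed bound.

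There is no real obstacle here: the proof is essentially a direct invocation of the combiner once monotonicity is checked. The only mild subtlety worth mentioning is that Theorem~\ref{thm:logkcombiner} requires deterministic base algorithms, which both the adaptive online gradient descent of~\cite{hazan2008adaptive} and Adagrad~\cite{duchi2011adaptive} are, and that the expectation in the bound is over the combiner's internal randomness against an oblivious adversary (which is the natural setting for adversarial OLO). No new analysis of the base algorithms themselves is needed.
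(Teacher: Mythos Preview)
Your proposal is correct and matches the paper's approach exactly: the paper's entire argument for this theorem is the one-line remark that ``both bounds are clearly monotone, so by applying our combiner construction'' the result follows, and you have simply filled in the routine details (monotonicity check, $K=2$, constant overhead). There is nothing to add.
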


% \begin{enumerate}
%     \item Unknown $\alpha$: for any given $\alpha$, our regret bounds are monotone regret bounds. Further, the bound for $\alpha/2$ is only a constant factor away from the bound for $\alpha$. So by maintaining $\log T$ different candidate $\alpha$ values we can combine them all to obtain within an $O(\log T)$ factor of the best possible $\alpha$ value.
%     \item Adapting to different norms. For any $\ell_p$ norm for $p\in(1,2]$ there is an algorithm that guarantees regret $\sup_{u\in \cB} \frac{\|u\|_p}{\sqrt{p-1}} \sqrt{\sum_{t=1}^T \|c_t\|_q^2}$. It may not be clear which $p$-norm is best until we have seen all the costs.  This is a monotone regret bound, so by making a discrete grid of $O(\log d)$ $p$-norms in a $d$-dimensional space, we can obtain the best of all these bounds in hindsight up to a $\log d$ increase.
%     \item Adagrad and dimension-free bounds simultaneously. A simple adaptive OGD obtains the regret bound $D_2\sqrt{\sum_{t=1}^T \|c_t\|_2^2}$, where $D_2$ is the 2-norm diameter of $X$. Adagrad instead obtains the bound $D_\infty \sum_{i=1}^d \sqrt{\sum_{t=1}^T c_{t,i}^2}$ where $D_\infty$ is the $\ell_\infty$ diameter of $X$ and $c_{t,i}$ is the $i$th component of $c_t$. The Adagrad bound can be extremely good when the $c_t$ are sparse, but might be much worse than the adaptive OGD bound otherwise. Both are monotone bounds though, so we can combine to get the best of either.
% \end{enumerate}

\fi

\section{Unconstrained setting}
\label{sec:uncons}
In this section, we develop an algorithm that leverages multiple hints in the unconstrained setting. Recall that in this setting, the output $x_t$ and comparison point $u$ are allowed to range over all of $\R^d$. Thus we cannot hope to bound regret by a uniform constant for all $u$. Instead, we bound the regret as a function of $\|u\|$. This setting has seen increased interest~\cite{orabona2014simultaneous,orabona2016coin,foster2018online,cutkosky2017online,cutkosky2018black}, and recently the notion of hints has also been studied~\cite{cutkosky2019combining, bhaskara2020online}. Here, we consider multiple hints in the unconstrained setting. Unlike the constrained case, this algorithm does not need to know $\alpha$ and hence does not need the combiner.  The algorithm again competes with the best convex combination of the hints. 

Following \cite{bhaskara2020online, cutkosky2019combining}, our algorithm initializes $K+1$ unconstrained online learners. The first online learner ignores the hints and attempts to output $x_t$ to minimize the regret. Each of the following $K$ online learners is restricted to output real numbers $y^{(i)}_t$ for $i=1,\dots,K$ rather than points in $\R^d$. The final output of our algorithm is then given by $\hat x_t = x_t +\sum_{i=1}^K y^{(i)}_t h^{(i)}_t$. Intuitively, the $i$th one-dimensional algorithm is attempting to learn how ``useful'' the $i$th hint sequence is. Upon receiving the cost $c_t$,  we provide the $i$th one-dimensional algorithm with the cost $\langle c_t, h^{(i)}_t\rangle$. Note that we are leaning heavily on the lack of constraints in this construction. \ifarxiv \else Our regret bound is given in Theorem \ref{thm:unconstrained}, proved in Appendix \ref{app:unconstrained}.\fi
% In this unconstrained case, we work with an easier notion of bad hints: define $B^{\vec h}_{ \alpha,*}$ as the smallest set of indices such that $\sum_{t \in [T] \setminus B^{\vec h}_{\alpha,*}} \inner{c_t}{h_t} \geq \alpha \cdot \sum_{t \in [T] \setminus B^{\vec h}_{\alpha,*}} \|c_t\|^2$.
% Note that for any $\alpha > 0$, we have $|B^{\vec h}_{\alpha,*}| \leq |B^{\vec h}_{\alpha}|$.

% Let $w \in \Delta_K$ be such that $H(w)$ is a sequence of hints where the $t$th hint is $\sum_i w^i \cdot h_t^i$.  Recall the definition of $B_{\alpha}^{\vec{h}^*}$ from Section~\ref{sec:prelim}. %let $B_{T,\alpha}$ be defined as in the icml paper, $B_{T,\alpha}$ be a smallest set of indices such that $\sum_{t\notin B_{T,\alpha}} \langle c_t, h^*_t\rangle \ge \alpha\sum_{t\notin B_{T,\alpha}} \|c_t\|^2$. 
\begin{restatable}{theorem}{thmunconstrained}\label{thm:unconstrained}
There is an algorithm $\cA$ for the unconstrained setting that achieves regret
\begin{align*}
    \regret_\cA(u, \vec{c} \mid H) = O\left(\inf_{w\in \Delta_K}\left\{ \|u\| (\log T) \left( \frac{\sqrt{\log K}}{\alpha} + \sqrt{\frac{B_{\alpha}^{H(w)}}{\alpha}} \right) \right\} \right).
\end{align*}
\end{restatable}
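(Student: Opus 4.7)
My plan follows the standard template for parameter-free unconstrained OLO with hints, using $K+1$ sub-learners as sketched in the paper. Write $g_t^{(i)} := \langle c_t, h_t^{(i)}\rangle$ for the scalar cost fed to the $i$th one-dimensional learner $\cA_i$. Using $\hat x_t = x_t + \sum_i y_t^{(i)} h_t^{(i)}$, a simple add-and-subtract of $\sum_i \lambda w^{(i)} g_t^{(i)}$ yields, for any $u \in \R^d$, $w \in \Delta_K$, and $\lambda > 0$, the exact identity
\begin{align*}
    \sum_{t=1}^T \langle c_t, \hat x_t - u\rangle = R_0(u) + \sum_{i=1}^K R_i(-\lambda w^{(i)}) - \lambda \sum_{t=1}^T \langle c_t, h_t(w)\rangle,
\end{align*}
where $R_0(u) := \sum_t \langle c_t, x_t - u\rangle$ is the regret of the $d$-dimensional sub-learner $\cA_0$ and $R_i(\beta) := \sum_t (y_t^{(i)} - \beta) g_t^{(i)}$ is the regret of the 1D sub-learner $\cA_i$ against any scalar comparator $\beta$. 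Since the left-hand side is exactly $\regret_\cA(u, \vec{c}\mid H)$, the work remaining is to bound each piece and then optimize $\lambda$ and $w$.

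I would then instantiate each $\cA_i$ (and $\cA_0$) as a parameter-free unconstrained OLO algorithm, as in \cite{cutkosky2019combining, cutkosky2018black, orabona2016coin}. Such algorithms guarantee, for any comparator $\beta$, a regret bound of the form $|\beta|\log(T)\sqrt{V^+}$ plus a small additive ``free'' term depending on an initial-scale parameter $\epsilon$, where $V^+$ is a data-dependent variance quantity. For this proof, I would use a hint-aware 1D variant (in the spirit of \cite{bhaskara2020online}) whose $V^+$ scales only with the ``bad'' part of the scalar cost stream (e.g., $\sum_t \max(0,-g_t^{(i)})^2$ or the number of bad steps), so that hints positively correlated with $c_t$ do not inflate $R_i$. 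Tuning the initial scales with $\epsilon = 1/K$ on the 1D sub-learners is what causes their bounds to collectively carry a $\sqrt{\log K}$ factor rather than a linear $K$ factor.

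The next ingredient is to bound the savings term $-\lambda \sum_t \langle c_t, h_t(w)\rangle$. Splitting by whether $t \in B_\alpha^{H(w)}$ and using $\langle c_t, h_t(w)\rangle \ge \alpha \|c_t\|^2$ on good steps together with $|\langle c_t, h_t(w)\rangle|\le 1$ on bad steps gives
\begin{align*}
    -\lambda \sum_{t=1}^T \langle c_t, h_t(w)\rangle \le -\lambda \alpha \sum_{t \notin B_\alpha^{H(w)}} \|c_t\|^2 + \lambda\, B_\alpha^{H(w)}.
\end{align*}
Combining the three pieces, the $R_0(u) \le \tilde O(\|u\|\log T \sqrt{V_0})$ bound offsets the negative $-\lambda\alpha\sum_{t\notin B}\|c_t\|^2$ term via AM--GM to contribute $O(\|u\|\log T/\alpha)$; meanwhile $\lambda\, B_\alpha^{H(w)}$, taken together with $\sum_i R_i(-\lambda w^{(i)})$ (bounded by Cauchy--Schwarz over $\Delta_K$ and picking up the $\sqrt{\log K}$ factor from initial-scale tuning), yields $O(\|u\|\log T\sqrt{B_\alpha^{H(w)}/\alpha})$ after choosing $\lambda = \Theta(\|u\|\log T /\sqrt{\alpha B_\alpha^{H(w)}})$. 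Taking the infimum over $w \in \Delta_K$ delivers the claimed bound.

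The main obstacle is controlling $\sum_i R_i(-\lambda w^{(i)})$ without paying either a $\sqrt{T}$ factor (from a naive total-variance bound) or a linear $K$ factor (from $K$ additive ``free'' terms). This requires picking the sub-learners carefully: (i) a 1D parameter-free algorithm whose regret is governed by a bad-step quantity only (so that its $V^+$ is controlled by $B_\alpha^{H(w)}$ rather than by $T$), and (ii) initial-scale tuning so the $K$ constants combine to $\sqrt{\log K}$ inside the $\tilde O(\cdot)$. Once the right sub-learner guarantees are in place, the remaining argument is algebraic bookkeeping and the AM--GM optimization in $\lambda$.
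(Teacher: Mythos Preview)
Your decomposition identity is correct and matches the paper's. The gap is in how you plan to control $\sum_i R_i(-\lambda w^{(i)})$. You propose using hint-aware 1D sub-learners whose variance $V_i^+$ scales with the ``bad part'' of the scalar stream $g_t^{(i)}=\langle c_t,h_t^{(i)}\rangle$, and then assert that this $V_i^+$ is controlled by $B_\alpha^{H(w)}$. But the $i$th 1D learner sees only hint $i$; its bad steps are those of $\vec h^{(i)}$, not of the combination $H(w)$. Aggregating via Cauchy--Schwarz gives at best $\sum_i w^{(i)}\sqrt{V_i^+}\le \sqrt{\sum_i w^{(i)} V_i^+}$, and this weighted average of \emph{individual} bad-step quantities is not bounded by $B_\alpha^{H(w)}$. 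The paper's own example in Section~\ref{sec:smooth-hinge} (two hints, each bad on half the steps, whose average is never bad) shows $\sum_i w^{(i)} V_i^+$ can be $\Theta(T)$ while $B_\alpha^{H(w)}=0$. So requirement (i) on your 1D sub-learners cannot be met, and the proof as sketched does not go through.

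The paper's route avoids this entirely by \emph{not} making the 1D learners hint-aware. It uses vanilla parameter-free learners with the crude bound $(g_t^{(i)})^2\le \|c_t\|^2$, giving $R_i(-\lambda w^{(i)})\le \tfrac{\epsilon}{K}+O\bigl(\lambda w^{(i)}\sqrt{\sum_t \|c_t\|^2\log(KT/\epsilon)}\bigr)$. Summing over $i$ (using $\sum_i w^{(i)}=1$) and restricting to $\lambda\le \|u\|\sqrt{\log T/\log(KT)}$ makes $\sum_i R_i$ the same order as $R_0(u)$, namely $O\bigl(\|u\|\sqrt{\sum_t\|c_t\|^2\log T}\bigr)$. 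The savings term is bounded with the \emph{full} sum, $-\lambda\sum_t\langle c_t,h_t(w)\rangle\le -\lambda\alpha\sum_t\|c_t\|^2+2\lambda|B_\alpha^{H(w)}|$, so that the single negative quadratic $-\lambda\alpha\sum_t\|c_t\|^2$ absorbs both square-root terms via AM--GM. Setting $\lambda=\|u\|/\sqrt{\alpha|B_\alpha^{H(w)}|+\log(KT)/\log T}$ then yields the stated bound. The $\sqrt{\log K}$ factor arises from the $\log(KT)$ inside the 1D regret (from the $\epsilon/K$ free-term tuning), not from a Cauchy--Schwarz over the $K$ learners.
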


\ifarxiv

\begin{proof}
Algorithm $\cA$ instantiates one $d$-dimensional \emph{parameter-free} OLO algorithm $\cA'$ that outputs $x_t$, gets costs $c_t$, and guarantees regret for some user specified $\epsilon$:
\begin{align*}
    \sum_{t=1}^T \langle c_t, x_t -u\rangle \le \epsilon + O\left(\|u\|\log(T)+\|u\|\sqrt{\sum_{t=1}^T \|c_t\|^2\log \frac{T}{\epsilon}}\right).
\end{align*}
Where the $O$ hides absolute constants. Such algorithms are described in several recent works~\cite{cutkosky2018black,cutkosky2019matrix, van2019user, kempka2019adaptive, mhammedi2020lipschitz}. Also, algorithm $\cA$ instantiates $K$ one-dimensional learning algorithms, $\cA_i$ for the hint sequence $\vec{h^{(i)}}$.  At time $t$, the $i$th such learner outputs $y^{(i)}_t$, gets cost $-\langle c_t, h^{(i)}_t\rangle$ and guarantees regret:
\begin{align*}
    \sum_{t=1}^T \langle c_t, h^{(i)}_t\rangle(y^{(i)} - y^{(i)}_t)
    &\le \frac{\epsilon}{K} + O\left(|y^{(i)}|\log(T)+|y^{(i)}|\sqrt{\sum_{t=1}^T \langle c_t, h^{(i)}_t\rangle^2 \log \frac{KT}{\epsilon}} \right)\\
    & \le \enspace \frac{\epsilon}{K} + O\left(|y^{(i)}|\log(T)+|y^{(i)}|\sqrt{\sum_{t=1}^T \|c_t\|^2\log \frac{KT}{\epsilon}}\right).
\end{align*}
These one-dimensional learners may simply be instances of the $d$-dimensional learner restricted to one dimension.
The algorithm $\cA$ responds with the predictions $\hat x_t = x_t - \sum_{i=1}^K y^{(i)}_t h^{(i)}_t$ and set $\epsilon=1$.  The regret is:
\begin{align*}
    \sum_{t=1}^T \langle c_t, \hat x_t - u\rangle &=\sum_{t=1}^T \langle c_t, x_t -u\rangle -\sum_{i=1}^K\sum_{t=1}^T\langle c_t, h_t^{(i)}\rangle y_t^{(i)}\\
    &=\inf_{y^{(1)}, \ldots, y^{(K)}\in \R} \left\{ \sum_{t=1}^T \langle c_t, x_t -u\rangle+\sum_{i=1}^K\sum_{t=1}^T \langle c_t, h^i_t\rangle(y^{(i)} - y^{(i)}_t) - \sum_{t=1}^T \left\langle c_t, \sum_{i=1}^Ky^{(i)}h^{(i)}_t \right\rangle \right\} \\
    &  \le O\left(\inf_{y^{(1)}, \ldots, y^{(K)}\in \R} \left\{ 1 + \|u\|\sqrt{\sum_{t=1}^T \|c_t\|^2\log T} + \sum_{i=1}^K\left( \frac{1}{K} + |y^{(i)}|\sqrt{\sum_{t=1}^T \|c_t\|^2\log(KT)}\right)\right.\right.\\
    &\kern15em\left.\left.+\|u\|\log(T)+\sum_{i=1}^K |y^{(i)}|\log(T)-\sum_{t=1}^T \left\langle c_t, \sum_{i=1}^Ky^{(i)}h^{(i)}_t \right\rangle \right\}\right) \\
    &  \le O\left(2 + \inf_{\sum_i |y^{(i)}|\le \|u\| \sqrt{\frac{\log T}{\log(KT)}}} \left\{ 2\|u\|\log(T)+2\|u\|\sqrt{\sum_{t=1}^T \|c_t\|^2\log T} -\sum_{t=1}^T \left\langle c_t, \sum_{i=1}^Ky^{(i)}h^{(i)}_t \right\rangle \right\}\right).
\end{align*}
Let $w$ be an arbitrary element of $\Delta_K$. We set $y^{(i)} = \|u\|\frac{w^{(i})}{\sqrt{\alpha |B^{H(w)}_{\alpha}|+\frac{\log(KT)}{\log T}}}$. Notice that this implies $\sum |y^{(i)}|\le \|u\| \sqrt{\frac{\log T}{\log(KT)}}$. Also, we have
\begin{align*}
    -\sum_{t=1}^T \langle c_t, H(w)_t\rangle&\le -\sum_{t=1}^T \alpha \|c_t\|^2 +2|B_{\alpha}^{H(w)}|, \quad\mbox{ and } \\
    -\sum_{t=1}^T \left\langle c_t, \sum_{i=1}^Ky^{(i)}h^{(i)}_t \right\rangle&\le -\frac{\|u\|}{\sqrt{\alpha |B_{\alpha}^{H(w)}|+\frac{\log(KT)}{\log T}}}\sum_{t=1}^T \alpha \|c_t\|^2 +2\|u\| \sqrt{\frac{|B_{\alpha}^{H(w)}|}{\alpha}}.
\end{align*}
Thus the regret bound for $\cA$ becomes 
\begin{align*}
    \regret_{\cA}(u, \vec{c} \mid H)&\le O\left(2+w\|u\|\log(T) +2\|u\|\sqrt{\frac{|B_{\alpha}^{H(w)}|}{\alpha}}\right.\\
    &\kern3em\left.+ 2\|u\|\sqrt{\sum_{t=1}^T \|c_t\|^2\log T} -\frac{\|u\|}{\sqrt{\alpha |B_{\alpha}^{H(w)}|+\frac{\log(KT)}{\log T}}}\sum_{t=1}^T \alpha \|c_t\|^2 \right)\\
    &\le O\left(2 + \frac{\|u\| (\log T)\sqrt{\alpha |B_{\alpha}^{H(w)}|+\frac{\log(KT)}{\log T}}}{\alpha} +2\|u\|\sqrt{\frac{|B_{\alpha}^{H(w)}|}{\alpha}}\right)\\
    &=O\left(\frac{\|u\|\sqrt{(\log T)\log(KT)}}{\alpha} + \|u\|(\log T) \sqrt{\frac{|B_{\alpha}^{H(w)}|}{\alpha}}\right).
\end{align*}
Since $w$ was chosen arbitrarily in $\Delta_K$, the bound holds for all $w\in\Delta_K$ and so we are done.
%Dependence on $K$  and $\alpha$ is $\frac{\sqrt{\log(K)}}{\alpha}$ even when $|B|=0$, which is not as good.
\end{proof}
\fi
%

% \subsection{Lower bounds}
%%%%%
% THIS LOWER BOUND DOESN'T ACTUALLY SEEM TO SAY ANYTHING USEFUL ABOUT THE QUANTITIES IN OUR THEOREM
%%%%%
% As a complement to our lower bound in Theorem \ref{thm:constrainedalpha}, we have 
% The following simple example shows that even in the single-hint case, when each hint has at least $\alpha$ correlation, the regret is $\Omega(1/\alpha)$.

% Also, the stronger bounds depending on average correlation rather than total number of bad hints we got in unconstrained case actually require a worse dependence on $\alpha$:

% \begin{theorem}
% In the one-dimensional unconstrained setting, there is a sequence of costs and two sequences of hints with the following property: (i) on average one sequence of hints has average correlation $\alpha=1/\sqrt{T}$ with the costs and (ii) the regret of any online algorithm is at least $\|u\|\sqrt{T}=\|u\|/\alpha$.
% \end{theorem}
% %
% \begin{proof}
% Let the sequence of costs be independent uniform random values in $\{ \pm 1\}$.  Let $\vec{h}$ also be independent random values in $\{ \pm 1 \}$ and let $H = \{ \vec{h}, -\vec{h} \}$.  Let $\cA$ be any algorithm.  Clearly the expected regret for any $u$ must be $\E[\regret_{\cA}(u \mid H)] \le \|u\|\sqrt{T}$, but also the expected correlation of the best hint sequence is $1/\sqrt{T}$.
% \end{proof}

% Our lower bounds leave open the possibility of getting $1/\sqrt{\alpha}$ in the unconstrained setting.

% \input{disc}
\section{Conclusions}

In this paper we obtained algorithms for online linear optimization in the presence of many hints that can be imperfect.  Besides generalizing previous results on online optimization with hints, our contributions include a simple algorithm for combining arbitrary learners that seems to have broader applications. Interesting future research directions include tightening the dependence on $\alpha$ in various cases and exploring the possibility of improved bounds for specific online  optimization problems. 

\iffalse
\section*{Broader Impact}

Our work focuses on theoretical foundations. Online learning methods have had direct impact in domains such as online advertising. But primarily, the methods developed are used in improving other optimization procedures, and thus only have an indirect impact. We believe that there are no adverse ethical aspects or potentially negative societal consequences of our work.
\fi

\bibliographystyle{plain}
\bibliography{mhints}

\clearpage
\appendix
\section{Single hint setting}\label{sec:betteralphasinglehint}

In this section, we modify the construction of \cite{bhaskara2020online} in the single hint setting to take into account knowledge of the parameter $\alpha$. Our goal is to prove Theorem \ref{thm:betteralphasinglehint}. The algorithm is nearly identical to that of \cite{bhaskara2020online} and most of the analysis is the same. We refer the reader to the original reference for complete details. 
% In the following, we use a compressed-sum notation for indexed variables: $a_{1:t} = \sum_{i=1}^t a_i$.

\begin{algorithm}[ht]
   \caption{$\singlehint_\alpha$}
   \label{alg:singlehint}
   \begin{algorithmic}
   \REQUIRE Parameter $\alpha$
    \STATE Define $\lambda_0 =1$ and $r_0=1$
    \STATE Set procedure $\cA$ to be Algorithm 2 in \cite{bhaskara2020online}.
    \FOR{$t=1, \dots, T$}
    	\STATE Get hint $h_t$
    	\STATE Get  $\xbar_t$ from procedure $\cA$, and set 
    	
    	\mymath{ 
    	x_t \gets \xbar_t + \frac{(\norm{\xbar_t}^2-1)}{2r_t} h_t
    	}
	\STATE Play $x_t$ and receive cost $c_t$
	\STATE Set $r_{t+1} \gets \sqrt{r_t^2 + \frac{\alpha\max(0, -\iprod{c_t, h_t})}{\log(T)}}$
	\STATE Define $\sigma_t = \frac{|\iprod{c_t, h_t}|}{r_t}$
	\STATE Define $\lambda_t$ as the solution to:
	
	\mymath{
	\lambda_t = \frac{\norm{c_t}^2}{\sum_{\tau=1}^t\sigma_\tau +  \lambda_\tau}
	}
	
	\STATE Define the loss $\ell_t(x)= \iprod{ c_t, x} + \frac{|\iprod{c_t, h_t}|}{2r_t} (\norm{x}^2 -1)$.
	Send the loss function $\ell_t$ to $\cA$
	\ENDFOR
\end{algorithmic}
\end{algorithm}

The only difference between our algorithm $\singlehint_\alpha$ and Algorithm 1 of \cite{bhaskara2020online} is the definition of $r_t$: when we set $r_{t+1} = \sqrt{r_t^2 + \frac{\max(0, -\iprod{c_t, h_t})\alpha}{\log(T)}}$, \cite{bhaskara2020online} instead sets $r_{t+1} = \sqrt{r_t^2 +\max(0,-\iprod{c_t, h_t})}$.
We can now prove Theorem \ref{thm:betteralphasinglehint}, which we restate below for reference:

\thmbetteralphasinglehint*
\begin{proof}
Following \cite{bhaskara2020online}, we observe that since $\cA$ always returns $\xbar_t\in \cB$, $x_t\in \cB$. Further,
\begin{align*}
    \langle c_t, x_t - u\rangle \le \ell_t(x_t) - \ell_t(u) + \frac{\max(0, -\langle c_t, h_t\rangle)}{r_t},
\end{align*}
and $\ell_t$ is $\sigma_t$-strongly convex.

Next, by \cite{bhaskara2020online} Lemma 3.4, we have
\begin{align*}
    \regret_{\singlehint_\alpha}(\cB, \vec{c} \mid \{\vec{h}\})\le \sum_{t=1}^T \frac{\max(0, - \langle c_t, h_t\rangle)}{r_t}+\sum_{t=1}^T \ell_t(\bar x_t) -\ell_t(u).
\end{align*}

We can bound the first sum as:
\begin{align*}
    \sum_{t=1}^T \frac{\max(0, - \langle c_t, h_t\rangle)}{r_t}&\le \frac{\log T}{\alpha}\sum_{t=1}^T \frac{\alpha\max(0, - \langle c_t, h_t\rangle)/\log T}{r_t}\\
    &\le \frac{2\log T}{\alpha}\sqrt{\sum_{t=1}^T \frac{\alpha\max(0, - \langle c_t, h_t\rangle)}{\log T}}\\
    &\le \sqrt{2\frac{\sum_{t=1}^T (\log T)\max(0, - \langle c_t, h_t\rangle)}{\alpha}}.
\end{align*}
% \begin{align*}
%     \sum_{t=1}^T \frac{\max(0, - \langle c_t, h_t\rangle)}{r_t}&\le \sum_{i=0}^{\lceil \log(\log(T))\rceil}\frac{\exp(i)}{\alpha}\sum_{t=\lfloor \exp(\exp(i))\rfloor-1 }^{\lfloor \exp(\exp(i+1))\rfloor-2}  \frac{\alpha\max(0, - \langle c_t, h_t\rangle)/\exp(i)}{r_t}\\
% \intertext{where we have assumed $c_t=0$ for $t>T$ for simplicity. Continuing:}
% &\le \sum_{i=0}^{\lceil \log(\log(T))\rceil}\frac{\exp(i)}{\alpha}\sum_{t=\lfloor \exp(\exp(i))\rfloor-1 }^{\lfloor \exp(\exp(i+1))\rfloor-2}   \frac{\alpha\max(0, - \langle c_t, h_t\rangle)/\log(t+1)}{r_t}\\
% \intertext{now we apply Lemma \ref{thm:sqrtsum}:}
% &\le 2\sum_{i=0}^{\lceil \log(\log(T))\rceil}\frac{\exp(i)}{\alpha}\sqrt{\sum_{t=\lfloor \exp(\exp(i))\rfloor-1}^{\lfloor \exp(\exp(i+1))\rfloor-2} \frac{\alpha\max(0, - \langle c_t, h_t\rangle)}{\log(t+1)}} \\
% &\le \frac{2}{\sqrt{\alpha}}\sum_{i=0}^{\lceil \log(\log(T))\rceil}\sqrt{\exp(i)}\sqrt{\sum_{t=\lfloor \exp(\exp(i))\rfloor-1}^{\lfloor \exp(\exp(i+1))\rfloor-2} \max(0, - \langle c_t, h_t\rangle)} \\
% \intertext{Finally, we apply Cauchy-Schwarz inequality:}
% &\le \frac{2}{\sqrt{\alpha}}\sqrt{\sum_{i=0}^{\lceil \log(\log(T))\rceil}\exp(i)}\sqrt{\sum_{t=1}^T \max(0, - \langle c_t, h_t\rangle)} \\
%     &=4\sqrt{ \frac{\log(T)(\log(\log(T))+1)}{\alpha}\sum_{t=1}^T \max(0, - \langle c_t, h_t\rangle)}
% \end{align*}
For the second sum, we appeal to Lemma 3.6 of \cite{bhaskara2020online}, which yields:
\begin{align*}
    \sum_{t=1}^T \ell_t(\bar x_t) -\ell_t(u)&\le \frac{1}{2} + 4\left(\sqrt{\sum_{t\in B^{\vec{h}}_{\alpha}}\|c_t\|^2} + \frac{r_T(\log T)}{\alpha}\right)\\
    &\le \frac{1}{2} + 4\left(\sqrt{\sum_{t\in B^{\vec{h}}_{\alpha}}\|c_t\|^2} + \frac{\sqrt{(\log^2 T) + (\log T)\alpha\sum_{t=1}^T \max(0,-\langle c_t, h_t\rangle)}}{\alpha}\right)\\
    &\le \frac{1}{2} + 4\left(\sqrt{\sum_{t\in B^{\vec{h}}_{\alpha}}\|c_t\|^2} + \frac{\log T}{\alpha}+\sqrt{\frac{(\log T)\sum_{t=1}^T \max(0,-\langle c_t, h_t\rangle)}{\alpha}}\right).
\end{align*}
Combining these identities now yields the desired theorem.
\end{proof}

% \begin{lemma}\label{thm:sqrtsum}
% Let $a_1,\dots,a_N$ be non-negative numbers such that $a_t\le 1$ for all $t$. Then for all $i$ and $j>i$:
% \begin{align*}
%     \sum_{t=i}^j \frac{a_t}{\sqrt{1+a_{1:t-1}}}\le 2\sqrt{a_{1:j}} - 2\sqrt{a_{1:i-1}}\le 2\sqrt{a_{i:j}}
% \end{align*}
% \end{lemma}
% \begin{proof}
% The proof is based on the concavity of the square root function, which implies that for all non-negative $x,y$, $\sqrt{x+y}-\sqrt{x}\ge \frac{y}{2\sqrt{x+y}}$. Then since $a_t\le 1$, we have a telescoping sum:
% \begin{align*}
% \sum_{t=i}^j \frac{a_t}{\sqrt{1+a_{1:t-1}}}&= \frac{a_t}{\sqrt{a_{1:t-1}+a_t}}\\
% &\le \sum_{t=i}^j 2\sqrt{a_{1:t}}-2\sqrt{a_{1:t-1}}\\
% &\le 2\sqrt{a_{1:j}} - 2\sqrt{a_{1:i-1}}
% \end{align*}
% For the second inequality, observe that for all positive $a$, the derivative of $\sqrt{a+x}-\sqrt{x}$ is negative for all positive $x$. Therefore $\sqrt{a+x}-\sqrt{x}\le \sqrt{a}$ for all positive $a$.
% \end{proof}

\section{Full proofs: Constrained setting}\label{app:constrained}

\ifarxiv
\else
\subsection{Proof of Theorem~\ref{thm:mwuthm}}\label{app:multiplicative-weights}
\mwuthm*

\subsection{Proof of Proposition~\ref{thm:smoothprop}}\label{app:smoothprop}
\smoothprop*

\fi

% \subsection{Proof of Proposition~\ref{thm:wregret}}
% \begin{restatable}{proposition}{wregret}\label{thm:wregret}
% For any $w_\star \in \Delta_K$, we have:
% \begin{align*}
%     \sum_{t=1}^T (\ell_t(w_t) -\ell_t(w_\star)) \le 2\sqrt{\log^2(K)+\log(K)\sum_{t=1}^T \|g_t\|_\infty^2}
% \end{align*}
% \end{restatable}
% \begin{proof}
% To begin, recall that the entropic regularizer $\psi(w) = \log(K)+\sum_{i=1}^K w^{(i)} (\log w^{(i)})$ is 1-strongly-convex with respect to the 1-norm over $\Delta_K$, has minimum value 0 and maximum value $\log(K)$.

% Then, standard bounds for follow-the-regularized-leader (e.g. \cite{mcmahan2017survey} Theorem 1) tell us that:
% \begin{align*}
%      \sum_{t=1}^T \ell_t(w_t) -\ell_t(w_\star)&\le  \frac{\sqrt{\log(K)+\sum_{t=1}^T \|g_t\|_\infty^2}}{\sqrt{\log(K)}}\psi(w_\star) +\sum_{t=1}^T \frac{\|g_t\|_\infty^2\sqrt{\log(K)}}{2\sqrt{\log(K)+\sum_{\tau=1}^{t-1}\|g_\tau\|_\infty^2}} \\
%      &\le  \frac{\sqrt{\log(K)+\sum_{t=1}^T \|g_t\|_\infty^2}}{\sqrt{\log(K)}}\psi(w_\star) +\sum_{t=1}^T \frac{\|g_t\|_\infty^2\sqrt{\log(K)}}{2\sqrt{\sum_{\tau=1}^{t}\|g_\tau\|_\infty^2}}\\
%      &\le \frac{\sqrt{\log(K)+\sum_{t=1}^T \|g_t\|_\infty^2}}{\sqrt{\log(K)}}\psi(w_\star)+ \sqrt{\log(K)\sum_{t=1}^{T}\|g_t\|_\infty^2}\\
%      &\le 2\sqrt{\log^2(K) + \log(K) \sum_{t=1}^{T}\|g_t\|_\infty^2}
% \end{align*}
% \end{proof}

\ifarxiv
\else
\subsection{Proof of Proposition~\ref{thm:selfbounding}}
\fi
Before proving Proposition~\ref{thm:selfbounding}, we apply the analysis of adaptive follow-the-regularized-leader (FTRL) as in~\cite{mcmahan2017survey} to obtain:

\begin{restatable}{proposition}{wregret}\label{thm:wregret}
For any $w_\star \in \Delta_K$, we have:
\begin{align*}
    \sum_{t=1}^T (\ell_t(w_t) -\ell_t(w_\star)) \le 2\sqrt{(\log^2 K)+(\log K)\sum_{t=1}^T \|g_t\|_\infty^2}.
\end{align*}
\end{restatable}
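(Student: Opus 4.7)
The statement is a standard adaptive follow-the-regularized-leader (FTRL) guarantee applied to the algorithm in Algorithm~\ref{alg:manyehintalpha}, where the update rule is FTRL over the simplex with the shifted negative-entropy regularizer $\psi(w) = \log K + \sum_i w^{(i)} \log w^{(i)}$ scaled by the data-dependent sequence $\eta_t = \sqrt{\bigl((\log K)+\sum_{\tau=1}^{t}\|g_\tau\|_\infty^2\bigr)/\log K}$. The plan is to first linearize the losses using convexity (Proposition~\ref{thm:smoothprop}(a)): since $g_t\in \partial \ell_t(w_t)$, we have $\ell_t(w_t)-\ell_t(w_\star) \le \langle g_t, w_t - w_\star\rangle$, so it suffices to control the linearized regret $\sum_t \langle g_t, w_t - w_\star\rangle$.

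Next I would invoke the standard adaptive FTRL regret bound (see the survey~\cite{mcmahan2017survey}). Two ingredients about $\psi$ on $\Delta_K$ are needed: (i) by Pinsker's inequality the negative entropy is $1$-strongly convex with respect to $\|\cdot\|_1$, hence $\eta_t \psi$ is $\eta_t$-strongly convex with respect to $\|\cdot\|_1$, and (ii) $\sup_{w\in \Delta_K}\psi(w)-\inf_{w\in \Delta_K}\psi(w) = \log K$. With these facts in hand, the adaptive FTRL inequality (with dual norm $\|\cdot\|_\infty$) yields
\begin{align*}
\sum_{t=1}^T \langle g_t, w_t - w_\star\rangle \;\le\; \eta_T \psi(w_\star) + \sum_{t=1}^T \frac{\|g_t\|_\infty^2}{2\eta_t} \;\le\; \eta_T \log K + \sum_{t=1}^T \frac{\|g_t\|_\infty^2}{2\eta_t}.
\end{align*}

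The last step is algebraic. Plugging in the chosen $\eta_t$, the first term becomes $\sqrt{(\log K)\bigl((\log K)+\sum_{t=1}^T\|g_t\|_\infty^2\bigr)}$. For the second term, writing $a_t = \|g_t\|_\infty^2$ and $b = \log K$, we have $\sum_t \tfrac{a_t}{2\eta_t} = \tfrac{\sqrt{b}}{2}\sum_t \tfrac{a_t}{\sqrt{b+\sum_{\tau\le t} a_\tau}}$, and the standard telescoping inequality $\sum_t \tfrac{a_t}{\sqrt{b+\sum_{\tau\le t}a_\tau}} \le 2\sqrt{b+\sum_t a_t}$ bounds this by $\sqrt{b(b+\sum_t a_t)} = \sqrt{(\log K)((\log K)+\sum_t \|g_t\|_\infty^2)}$. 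Summing the two contributions gives exactly the claimed bound of $2\sqrt{(\log^2 K)+(\log K)\sum_{t=1}^T \|g_t\|_\infty^2}$.

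The main obstacle is essentially bookkeeping: one has to confirm that the specific form of $\eta_t$ chosen in Algorithm~\ref{alg:manyehintalpha} (which uses ``$\log K$'' both as the range of $\psi$ and as an additive term inside the square root) matches the optimal choice that balances the two terms in the adaptive FTRL inequality. Once this is verified, the rest is the routine telescoping-sum lemma, and nothing deeper is required. No novel technique beyond the textbook adaptive FTRL analysis is needed, which is why the authors appeal directly to~\cite{mcmahan2017survey}.
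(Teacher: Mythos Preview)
Your proposal is correct and follows essentially the same route as the paper: invoke the adaptive FTRL bound from~\cite{mcmahan2017survey} using that $\psi$ is $1$-strongly convex in $\|\cdot\|_1$ with range $\log K$, then control the per-step term via the standard telescoping inequality $\sum_t a_t/\sqrt{b+\sum_{\tau\le t}a_\tau}\le 2\sqrt{b+\sum_t a_t}$. The only cosmetic difference is that the paper quotes the FTRL bound with $\sum_{\tau\le t-1}$ in the denominator (then passes to $\sum_{\tau\le t}$), whereas you write $\eta_t$ directly with $\sum_{\tau\le t}$; both lead to the same final expression.
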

\begin{proof}
To begin, recall that the entropic regularizer $\psi(w) = \log(K)+\sum_{i=1}^K w^{(i)} (\log w^{(i)})$ is 1-strongly-convex with respect to the 1-norm over $\Delta_K$, has minimum value 0 and maximum value $\log K$.

Then, standard bounds for FTRL (e.g., \cite[Theorem 1]{mcmahan2017survey}) tell us that:
\begin{align*}
     \sum_{t=1}^T \ell_t(w_t) -\ell_t(w_\star)&\le  \sqrt{\frac{(\log K)+\sum_{t=1}^T \|g_t\|_\infty^2}{\log K}}\psi(w_\star) +\sum_{t=1}^T \frac{\|g_t\|_\infty^2\sqrt{\log K}}{2\sqrt{(\log K)+\sum_{\tau=1}^{t-1}\|g_\tau\|_\infty^2}} \\
     &\le  \sqrt{\frac{(\log K)+\sum_{t=1}^T \|g_t\|_\infty^2}{\log K}}\psi(w_\star) +\sum_{t=1}^T \frac{\|g_t\|_\infty^2\sqrt{\log K}}{2\sqrt{\sum_{\tau=1}^{t}\|g_\tau\|_\infty^2}}\\
     &\le \sqrt{\frac{(\log K)+\sum_{t=1}^T \|g_t\|_\infty^2}{\log K}}\psi(w_\star)+ \sqrt{(\log K)\sum_{t=1}^{T}\|g_t\|_\infty^2}\\
     &\le 2\sqrt{(\log^2 K) + (\log K) \sum_{t=1}^{T}\|g_t\|_\infty^2}.
\end{align*}
\end{proof}

Now with Proposition \ref{thm:wregret} in hand, we can restate and prove:
\selfbounding*
\begin{proof}
From Proposition \ref{thm:smoothprop}, we have
\begin{align*}
    \sum_{t=1}^T \|g_t\|_\infty^2\le \sum_{t=1}^T \frac{4}{\alpha}\ell_t(w_t).
\end{align*}
Combining this with the regret bound of Proposition \ref{thm:wregret} yields:
\begin{align*}
    \sum_{t=1}^T \ell_t(w_t) - \ell_t(w_\star)&\le 2\sqrt{(\log^2 K)+\frac{4\log K}{\alpha}\sum_{t=1}^T \ell_t(w_t)}.
\end{align*}
If we set $R=\sum_{t=1}^T \ell_t(w_t)-\ell_t(w_\star)$, we can rewrite the above as:
\begin{align*}
    R\le 2\sqrt{(\log^2 K) + \frac{4\log K}{\alpha}R + \frac{4\log K}{\alpha}\sum_{t=1}^T \ell_t(w_\star)}.
\end{align*}
Now we use $\sqrt{a+b}\le \sqrt{a}+ \sqrt{b}$ and solve for $R$:
\begin{align*}
    R&\le \frac{16\log K}{\alpha} + \sqrt{4 \log^2 K + \frac{16\log K}{\alpha}\sum_{t=1}^T \ell_t(w_\star)}\\
    &\le \frac{18\log K}{\alpha} + \sqrt{\frac{16\log K}{\alpha}\sum_{t=1}^T\ell_t(w_\star)}\\
    \implies
    \sum_{t=1}^T \ell_t(w_t)&\le \sum_{t=1}^T \ell_t(w_\star) + \frac{18\log K}{\alpha} + \sqrt{\frac{16\log K}{\alpha}\sum_{t=1}^T\ell_t(w_\star)}.
\end{align*}
Next, observe that $\sqrt{aX}\le X + \frac{a}{4}$, so that
\begin{align*}
    \sum_{t=1}^T \ell_t(w_t)&\le 2\sum_{t=1}^T \ell_t(w_\star) + \frac{22\log K}{\alpha}.
\end{align*}
as desired.
\end{proof}

\ifarxiv
\else
\section{Lower bound proofs}\label{app:lower}
\logklower*

\constrainedalpha*

\fi

\section{Proofs from Section~\ref{sec:combiner}}
\label{app:combiner}

\Kcombiner*
\begin{proof}
We can divide the operation of Algorithm \ref{alg:combiner} into phases in which $\gamma$ is constant. Each phase may be further subdivided into sub-phases in which $i$ is constant. First, let us bound the regret in a single phase with fixed $\gamma$. Suppose this phase has $N \leq K$ sub-phases\footnote{All phases except maybe the last phase have exactly $K$ sub-phases.}. Let $t_1,\dots,t_N$ be the time indices at which each sub-phase begins, and let $t_{N+1}-1$ be the last time index belonging to this phase. Notice that for all $i\le N$, we must have $r^{i,\gamma}_{t_{i+1}-t_i-1}\le \gamma$ since the $i$th sub-phase lasts for $t_{i+1}-t_i$ iterations. Then since $\sup_{x,y}\langle c_{t_{i+1}-1},x-y\rangle\le 1$ for all $i$ and $x,y\in X$, we have $r^{i,\gamma}_{t_{i+1}-t_i}\le r^{i,\gamma}_{t_{i+1}-t_i-1} +1\le \gamma + 1$.
Now we can write the regret incurred over this phase as:
\begin{align*}
    \sup_{u\in X} \sum_{t=t_1}^{t_{N+1}-1} \langle c_t, x_t - u\rangle&\le \sum_{i=1}^N \sup_{u\in X} \sum_{t=t_i}^{t_{i+1}-1} \langle c_t, x_t-u\rangle
    &\le \sum_{i=1}^N  r^{i,\gamma}_{t_{i+1} - t_i}
    \le N(\gamma +1)
    \le K\gamma + K.
\end{align*}

Let $P$ denote the total number of phases. We now show that $P\le 2 +\max(-1, \log_2\left(\min_i  \Rci{i}{1}{T}{\vec{c}}\right))$. Suppose otherwise. Let $j=\argmin_i  \Rci{i}{1}{T}{\vec{c}}$ be the algorithm with the least total regret. Let us consider the $(P-1)$th phase. In this phase, $\gamma=2^{P-2}$. Since $P> 2+ \log_2\left(\min_i  \Rci{i}{1}{T}{\vec{c}}\right)$, we must have $\min_i  \Rci{i}{1}{T}{\vec{c}} < \gamma$. Consider the $j$th sub-phase in this phase. Since $\gamma$ will eventually increase, this sub-phase must eventually end. Therefore there must be some $t$ and $\tau$ such that $t+\tau<T$ and
\begin{align*}
    \sup_{u\in X} \sum_{\tau'=1}^{\tau} \langle c_{t+\tau'}, w_{\tau'} - u\rangle > \gamma,
\end{align*}
where $w_{\tau'}$ is the output of $A_j$ after seeing input $c_t,\dots,c_{t+\tau'-1}$. By the increasing property of $R_j$, we also have:
\begin{align*}
    \sup_{u\in X} \sum_{\tau'=1}^{\tau} \langle c_{t+\tau'}, w_{\tau'} - u\rangle \le \Rci{j}{t}{t+\tau}{\vec{c}} \le \Rci{j}{1}{T}{\vec{c}} <\gamma.
\end{align*}
which is a contradiction. Therefore $P\le  2 +\max(-1, \log_2\left(\min_i \Rci{i}{1}{T}{\vec{c}}\right))$.

Now we are in a position to calculate the total regret. Let $1=T_1,\dots,T_P$ be the start times of the $P$ phases, and let $T_{P+1}-1=T$ for notational convenience. Then we have:
\begin{align*}
    \sup_{u\in X} \sum_{t=1}^T\langle c_t, x_t - u\rangle &\le \sum_{e=1}^P \sup_{u\in X} \sum_{t=T_e}^{T_{e+1}-1}\langle c_t, x_t - u\rangle. \\
    \intertext{Now since the regret in an phase is at most $K\gamma +K$, and $\gamma$ doubles every phase,}
    &\le \sum_{e=1}^P K2^{e-1} + K
    \le KP + K2^{P}\\
    &\le K2^{P+1}\\
    &\le K\left(4+4\min_i \Rci{i}{1}{T}{\vec{c}}\right), 
\end{align*}
where the second-to-last inequality follows from $x\le 2^{x}$ for $x\ge 1$, and the last inequality is from case analysis.
\end{proof}

% \begin{wrapfigure}{R}{0.5\textwidth}
% \begin{minipage}{0.5\textwidth}
%\centering
\begin{algorithm}[H]
\caption{Randomized combiner.}\label{alg:logkcombiner}
\begin{algorithmic}
  \STATE{\bfseries Input:} Online algorithms $\cA_1,\dots,\cA_K$
  \STATE Reset $\cA_1$
  \STATE Set $\gamma \leftarrow 1$, $\tau \leftarrow 1$
  \STATE Initialize the candidate indices $C \leftarrow [K]$
  \STATE Choose index $i$ uniformly at random from $C$
  \FOR{$t=1, \dots, T$}
  \FOR{$j\in C$}
  \STATE Get $y^j_\tau$, the $\tau$th output of $\cA_j$ 
  \ENDFOR
  \STATE Respond $x_t \leftarrow y^i_\tau$
  \STATE Get cost $c_t$, define $g_\tau \leftarrow c_t$
  \FOR{$j\in C$}
  \STATE Send $g_\tau$ to $\cA_j$ as $\tau$th cost
  \STATE Set $r^{j,\gamma}_\tau \leftarrow \sup_{u\in \cB}\sum_{\tau'=1}^\tau \langle g_{\tau'}, y^j_{\tau'}- u\rangle$
  \IF{$r^{j,\gamma}_\tau >\gamma$}
  \STATE Set $C \leftarrow C \setminus \{j\}$
  \ENDIF
  \ENDFOR
  \IF{$i\notin C$}
  \IF{$C=\emptyset$}
  \STATE Set $C \leftarrow [K]$
  \STATE Set $\gamma \leftarrow 2\gamma$
  \ENDIF
  \STATE Set $\tau \leftarrow 1$
  \STATE Reset $\cA_j$ for all $j\in C$
  \STATE Select index $i$ uniformly at random from $C$
  \ENDIF
  \STATE Set $\tau \leftarrow \tau + 1$
  \ENDFOR
\end{algorithmic}
\end{algorithm}
% \end{minipage}
% \end{wrapfigure}

\logkcombiner*

\begin{proof}
We divide the operation of Algorithm \ref{alg:logkcombiner} into phases in which $\gamma$ is constant. Each phase is further subdivided into sub-phases in which $i$ is constant. First, let us fix an phase $e$ with a fixed value of $\gamma$ and bound the expected regret incurred in this phase. Let $N$ denote the number of sub-phases in this phase. Just as in the proof of Theorem \ref{thm:Kcombiner}, we can show that the total regret incurred in this phase is at most $N(\gamma + 1)$. However, while there are exactly $K$ sub-phases in any phase of Algorithm \ref{alg:combiner} (except perhaps the last one), the number of sub-phases in any phase of Algorithm \ref{alg:logkcombiner} is a random variable. 

We now bound $\E[N]$, the expected number of sub-phases in any phase.
For the fixed phase $e$, for any time index $t$, let $F(i,t)$ be the smallest index $\tau \geq t$ such that $\sup_{u\in X}\sum_{\tau'=t}^\tau \langle c_{\tau'},w^i(t,\tau')-u\rangle >\gamma$, where we define $w^i(t,\tau')$ to be the output of $A_i$ after seeing input $c_t,\dots,c_{\tau'-1}$ and $w^i(t,t)$ to be the initial output of $A_i$. We set $F(i,t)=T$ if no such index $\tau\le T$ exists. Intuitively, $F(i,t)$ denotes the index $\tau \geq t$ when the regret experienced by algorithm $A_i$ that is initialized at time $t$ first exceeds $\gamma$.

Let $C(S,t)$ be the expected number of sub-phases (counting the current one) left in the phase if a sub-phase starts at time $t$ with the specified set of active indices $S$. We define $C(S,T+1)=C(\emptyset, t)=0$ for all $S$ and $t$ for notational convenience. Note that $C(S,T)=1$ for all $S$. Further, by definition, we have $\E[N] = C(\{1, 2, \ldots, K\}, t)$ for some $t$ (corresponding to the start of the phase). We claim that $C$ satisfies:
\begin{align*}
    C(S,t) = 1+ \frac{1}{|S|}\sum_{i\in S} C(S \setminus \{j\in S ~\mid~ F(j,t)\le F(i,t)\}, F(i,t)+1).
\end{align*}
To see this, observe that each index $i\in S$ is equally likely to be selected for the fixed $i$ throughout the sub-phase starting at time $t$. By definition of $F$, the sub-phase will end at time $F(i,t)$ if the selected index is $i$. Further, at the end of the sub-phase, $S$ will be $S \setminus \{j\in S ~\mid~ F(j,t)\le F(i,t)\}$. Therefore, conditioned on selecting index $i$ for this sub-phase, the expected number of sub-phases is $1+C(S \setminus \{j\in S ~\mid~ F(j,t)\le F(i,t)\}, F(i,t)+1)$. Since each index is selected with probability $1/|S|$, the stated identity follows.
Now we apply Lemma~\ref{thm:logrecursion} to conclude that $C(\{1,\dots,K\},t)\le \log_2(K+1)$ for all $t$, which implies $\E[N] \leq \log_2(K+1)$.

Finally, let $P$ denote the total number of phases. We can show that $P \leq 2 + \max(-1, \log_2(\min_i \Rci{i}{1}{T}{\vec{c}}))$. The proof of this claim is identical to that in Theorem~\ref{thm:Kcombiner} and is omitted for brevity.
Let $N_p$ and $\gamma_p = 2^{p-1}$ denote the number of sub-phases in phase $p$ and the corresponding value for $\gamma$ respectively. We can then conclude the total expected regret experienced by Algorithm~\ref{alg:logkcombiner} is
\begin{align*}
    \E\left[\sup_{u\in X}\sum_{t=1}^T \langle c_t, x_t-u\rangle\right] &\leq \sum_{p=1}^P \E[N_p](\gamma_p + 1) \leq (2^P + P) \cdot \log_2(K+1)\\
    &\leq \log_2(K+1)\left(4 + 4\min_i \Rci{i}{1}{T}{\vec{c}}\right).
\end{align*}

To prove the second bound for an adaptive adversary, we simply observe that in the worst-case, we cannot have more than $K$ sub-phases in any phase. The rest of the argument is identical.
\end{proof}

%What is happening here?\\
%Why is this text center aligned?\\
In order to prove Theorem \ref{thm:logkcombiner}, we need the following technical Lemma: 

\begin{restatable}{lemma}{logrecursion}\label{thm:logrecursion}
Let $F: [K] \times [T] \rightarrow [T]$ be such that $F(i,t) \geq t$ for all $i \in [K],t \in [T]$ and $C: 2^{[K]} \times [T] \rightarrow \R$ be a function that satisfies $C(\emptyset, t) = 0$ for all $t$, $C(S,T)=1$ for all $S$, $C(S,T+1)=0$ for all $S$, and $C$ satisfies the recursion:
\begin{align*}
    C(S,t) = 1+\frac{1}{|S|}\sum_{i\in S}C(S \setminus \{j\in S ~\mid~ F(j,t)\le F(i,t)\},F(i,t)+1).
\end{align*}
Then $C(\{1,\dots,K\}, t)\le \log_2(K+1)$ for all $t$.
\end{restatable}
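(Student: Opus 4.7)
\medskip

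\textbf{Proof proposal for Lemma \ref{thm:logrecursion}.}

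The plan is to prove the stronger statement $C(S,t) \le \log_2(|S|+1)$ for every $S \subseteq [K]$ and every $t$, by induction on $|S|$. The base case $|S|=0$ is trivial since $C(\emptyset,t)=0 = \log_2 1$, and the case $|S|=1$ gives $C(\{i\},t) = 1 + C(\emptyset, F(i,t)+1) = 1 = \log_2 2$ directly from the recursion.

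For the inductive step, fix $S$ of size $n\ge 2$ and fix $t$. The key combinatorial observation is that if we order the elements of $S$ as $i_1,\dots,i_n$ so that $F(i_1,t) \le F(i_2,t) \le \cdots \le F(i_n,t)$ (breaking ties arbitrarily), then for each $k$, the set $S \setminus \{j \in S : F(j,t) \le F(i_k,t)\}$ that appears in the recursion has cardinality at most $n-k$, since it excludes $i_1,\dots,i_k$. Applying the inductive hypothesis at the (strictly smaller) sets and at time $F(i_k,t)+1 \le T+1$ (where the endpoint conventions $C(S,T+1)=0$ and $C(\emptyset,\cdot)=0$ are consistent with $\log_2(|S|+1)$ at $|S|=0$), we obtain
\[
C(S,t) \;\le\; 1 + \frac{1}{n}\sum_{k=1}^{n} \log_2(n-k+1) \;=\; 1 + \frac{1}{n}\log_2(n!).
\]

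It then remains to verify $1 + \tfrac{1}{n}\log_2(n!) \le \log_2(n+1)$, equivalently $n! \le \bigl(\tfrac{n+1}{2}\bigr)^n$. This is an immediate consequence of the AM--GM inequality applied to $1,2,\dots,n$:
\[
\sqrt[n]{n!} \;=\; \sqrt[n]{1\cdot 2\cdots n} \;\le\; \frac{1+2+\cdots+n}{n} \;=\; \frac{n+1}{2}.
\]
Combining these displays gives $C(S,t) \le \log_2(n+1)$, completing the induction. Specializing to $S = \{1,\dots,K\}$ yields the lemma.

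The only mildly subtle point is the indexing argument used to bound $|S_{i_k}| \le n-k$; I expect no real obstacle otherwise, since the analytic inequality needed is just AM--GM. One has to be a little careful that the recursion is well-founded: each call either strictly decreases $|S|$ (if $F(i,t) < T$) or, in the boundary case $F(i,t)=T$, produces $C(\cdot,T+1)=0$; in either case the recursion terminates, so the induction on $|S|$ is legitimate.
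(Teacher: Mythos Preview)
Your proof is correct and essentially the same as the paper's: the paper inducts via the auxiliary quantity $Z(N)=\sup_{t,\,|S|\le N} C(S,t)$ and closes with Jensen's inequality for the concave function $\log_2$, while you induct directly on $|S|$ and close with AM--GM, which is exactly the same inequality in this setting.
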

\begin{proof}
We define the auxiliary function $Z(N)=\sup_{t, |S|\le N}C(S,t)$. Observe $Z(0)=0$, $Z(1)=1$, and $Z(N)$ is non-decreasing with $N$. Now suppose for purposes of induction that $Z(n)\le \log_2(n+1)$ for $n<N$. Then we have
\begin{align*}
    Z(N)&\le 1+\sup_{N'\le N}\frac{1}{N'}\sup_{t,|S|=N'}\sum_{i\in S}C(S-\{j\in S ~\mid~ F(j,t)\le F(i,t)\},F(i,t)+1)\\
    &\le 1+ \sup_{N'\le N}\frac{1}{N'}\sup_{t,|S|=N'}\sum_{i\in S}Z(N'-|\{j\in S ~\mid~ F(j,t)\le F(i,t)\}|). \\
\intertext{Now since $Z(n)$ is non-decreasing in $n$, this is bounded by:}
    &\le 1+ \sup_{N'\le N}\frac{1}{N'}\sum_{i=1}^{N'}Z(N'-i)\\
    &\le 1+ \sup_{N'\le N}\frac{1}{N'}\sum_{i=1}^{N'}\log_2(N'-i+1). \\
    \intertext{Now we apply Jensen inequality to the concave function $\log_2(n)$:}
    &\le 1+ \sup_{N'\le N}\log_2\left(\frac{1}{N'}\sum_{i=1}^{N'} N'-i+1\right)\\
    &\le 1+\sup_{N'\le N} \log_2((N'+1)/2)\\
    &=\log_2(N+1).
\end{align*}
To conclude, note that clearly $C(\{1,\dots,K\},t)\le Z(K)$ for all $t$.
\end{proof}

\ifarxiv
\else

\fi

\ifarxiv
\else
\section{Proof of Theorem \ref{thm:unconstrained}}\label{app:unconstrained}
\thmunconstrained*

\fi

\end{document}